\def\eqref#1{equation~\ref{#1}}
\def\1{\bm{1}}
\DeclareMathAlphabet{\mathsfit}{\encodingdefault}{\sfdefault}{m}{sl}
\SetMathAlphabet{\mathsfit}{bold}{\encodingdefault}{\sfdefault}{bx}{n}
    \newtheorem{definition}{Definition}%
    \newtheorem{corollary}{Corollary}
    \newtheorem{theorem}{Theorem}
\providecommand{\customgenericname}{}
\newcommand{\newcustomtheorem}[2]{%
  \newenvironment{#1}[1]
  {%
   \renewcommand\customgenericname{#2}%
   \renewcommand\theinnercustomgeneric{##1}%
   \innercustomgeneric
  }
  {\endinnercustomgeneric}
}
\providecommand{\eqa}				[1]		{\begin{align}#1\end{align}}
\providecommand{\eqas}			[1]		{\begin{align*}#1\end{align*}}
\providecommand{\ie}{\emph{i.e.,}~}
\providecommand{\eg}{\emph{e.g.,}~}
\providecommand\qcomment[1]{ }
\providecommand{\realnum}					{\mathbb{R}}
\providecommand{\naturalnum}				{\mathbb{N}}
\renewcommand{\(}						{\left(}
\renewcommand{\)}						{\right)}
\renewcommand{\[}						{\left[}
\renewcommand{\]}						{\right]}
\providecommand{\Prob}{\mathbbm{P}}
\providecommand{\Probop}{\mathop{\Prob}}
\def\i{\bm{i}}
\def\i{\bm{j}}
\def\ch{\hat{{c}}}
\def\fh{\hat{{f}}}
\def\ph{\hat{{p}}}
\def\yh{\hat{{y}}}
\def\Ch{\hat{{C}}}
\def\Jh{\hat{{J}}}
\def\Os{\mathcal{{O}}}
\def\Ps{\mathcal{{P}}}
\def\Us{\mathcal{{U}}}
\def\Xs{\mathcal{{X}}}
\def\Ys{\mathcal{{Y}}}
\newenvironment{proof}{\paragraph{{\normalfont \emph{Proof.}}}}{\hfill$\square$}
\title{PAC Confidence Predictions for Deep Neural Network Classifiers}
\author{Sangdon Park, Shuo Li, Insup Lee \& Osbert Bastani \\
PRECISE Center \\
University of Pennsylvania\\
\texttt{\{sangdonp, lishuo1, lee, obastani\}@seas.upenn.edu} \\
% \And
% Shuo Li \\
% PRECISE Center \\
% University of Pennsylvania\\
% \texttt{lishuo1@seas.upenn.edu} \\
% \AND
% Osbert Bastani \\
% PRECISE Center \\
% University of Pennsylvania\\
% \texttt{obastani@seas.upenn.edu} \\
% \And
% \hspace{17.5ex} Insup Lee \\
% \hspace{18ex} PRECISE Center \\
% \hspace{18ex} University of Pennsylvania\\
% \hspace{18ex} \texttt{lee@cis.upenn.edu} \\
% \And
% Ji Q. Ren \& Yevgeny LeNet \\
% Department of Computational Neuroscience \\
% University of the Witwatersrand \\
% Joburg, South Africa \\
% \texttt{\{robot,net\}@wits.ac.za} \\
% \AND
% Coauthor \\
% Affiliation \\
% Address \\
% \texttt{email}
}
\begin{document}

\maketitle

\begin{abstract}
A key challenge for deploying deep neural networks (DNNs) in safety critical settings is the need to provide rigorous ways to quantify their uncertainty. In this paper, we propose a novel algorithm for constructing predicted classification confidences for DNNs that comes with provable correctness guarantees. Our approach uses Clopper-Pearson confidence intervals for the Binomial distribution in conjunction with the histogram binning approach to calibrated prediction. In addition, we demonstrate how our predicted confidences can be used to enable downstream guarantees in two settings: (i) fast DNN inference, where we demonstrate how to compose a fast but inaccurate DNN with an accurate but slow DNN in a rigorous way to improve performance without sacrificing accuracy, and (ii) safe planning, where we guarantee safety when using a DNN to predict whether a given action is safe based on visual observations. In our experiments, we demonstrate that our approach can be used to provide guarantees for state-of-the-art DNNs.
%Confidence prediction by neural network classifiers tends to be overconfident, leading to reliability issues on interpreting the predicted confidence. 
%To enhance the reliability of neural network classifiers, 
%we propose a conservative way to predict the confidence by estimating the confidence interval of the true confidence. 
%Moreover, we apply the proposed conservative confidence prediction to interesting applications: \emph{safe planning} and \emph{fast neural network inference}. 
%For the safe planning, we exploit the conservative confidence to theoretically guarantee safety of an agent, while maintaining good performance. 
%Similarly for the fast neural network inference, we demonstrate that conservative confidence can be used to make theoretical error guarantee while boosting neural network inference speed. 
\end{abstract}

%% !TEX root = ../cons_conf.tex

%%
%% intro
%%
\section{Introduction}

Due to the recent success of machine learning, there has been increasing interest in using predictive models such as deep neural networks (DNNs) in safety-critical settings, such as robotics (e.g., obstacle detection~\citep{ren2015faster} and forecasting~\citep{kitani2012activity}) and healthcare (e.g., diagnosis~\citep{gulshan2016development,esteva2017dermatologist} and patient care management~\citep{liao2020personalized}).

One of the key challenges is the need to provide guarantees on the safety or performance of DNNs used in these settings. The potential for failure is inevitable when using DNNs, since they will inevitably make some mistakes in their predictions. Instead, our goal is to design tools for \emph{quantifying} the uncertainty of these models; then, the overall system can estimate and account for the risk inherent in using the predictions made by these models. For instance, a medical decision-making system may want to fall back on a doctor when its prediction is uncertain whether its diagnosis is correct, or a robot may want to stop moving and ask a human for help if it is uncertain to act safely. Uncertainty estimates can also be useful for human decision-makers---\eg for a doctor to decide whether to trust their intuition over the predicted diagnosis.

While many DNNs provide confidences in their predictions, especially in the classification setting, these are often overconfident. This phenomenon is most likely because DNNs are designed to overfit the training data (\eg to avoid local minima~\citep{safran2018spurious}), which results in the predicted probabilities on the training data being very close to one for the correct prediction. Recent work has demonstrated how to \emph{calibrate} the confidences to significantly reduce overconfidence~\citep{guo2017calibration}. Intuitively, these techniques rescale the confidences on a held-out calibration set. Because they are only fitting a small number of parameters, they do not overfit the data as was the case in the original DNN training.
%, even if the held-out calibration set is significantly smaller than the original training set.
However, these techniques do not provide theoretical guarantees on their correctness, which can be necessary in safety-critical settings to guarantee correctness.

We propose a novel algorithm for calibrated prediction in the classification setting that provides theoretical guarantees on the predicted confidences. We focus on \emph{on-distribution} guarantees---\ie where the test distribution is the same as the training distribution. In this setting, we can build on ideas from statistical learning theory to provide \emph{probably approximately correctness (PAC)} guarantees~\citep{valiant1984theory}. Our approach is based on a calibrated prediction technique called histogram binning~\citep{zadrozny2001obtaining}, which rescales the confidences by binning them and then rescaling each bin independently. We use Clopper-Pearson bounds on the tails of the binomial distribution to obtain PAC upper/lower bounds on the predicted confidences.

Next, we study how it enables theoretical guarantees in two applications.
First, we consider the problem of speeding up DNN inference by composing a fast but inaccurate model with a slow but accurate model---\ie by using the accurate model for inference only if the confidence of the inaccurate one is underconfident~\citep{teerapittayanon2016branchynet}. We use our algorithm to obtain guarantees on accuracy of the composed model.
Second, for safe planning, we consider using a DNN to predict whether or not a given action (\eg move forward) is safe (\eg do not run into obstacles) given an observation (\eg a camera image). The robot only continues to act if the predicted confidence is above some threshold. We use our algorithm to ensure safety with high probability. Finally, we evaluate the efficacy of our approach in the context of these applications.
%\textbf{Contributions.}
%
%We formulate the PAC calibrated prediction problem and provide an algorithm that solves this problem (Section~\ref{sec:algo}). Then, we demonstrate two applications of our approach (Section~\ref{cons_conf:sec:fast} \& \ref{cons_conf:sec:safe}). Finally, we evaluate the efficacy of our approach on these applications (Section~\ref{sec:exp}).

%\textbf{Motivations:}
%\begin{itemize}
%\item confidence estimated by a neural network could deviate from the true confidence.
%\item can we find an interval that contains the true confidence with high probability?
%\item Is the conservative confidence useful in practice?
%\end{itemize}

%% !TEX root = ../cons_conf.tex

%%
%% related work
%%
\textbf{Related work.}
% Calibrated classification
Calibrated prediction \citep{murphy1972scalar, degroot1983comparison,platt1999probabilistic} has recently gained attention as a way to improve DNN confidences \citep{guo2017calibration}.
%The definition of calibration originated in meteorology .
%In binary classification, Platt scaling is a simple yet effective classification calibration approach \citep{platt1999probabilistic}. %Temperature scaling is the multi-class extension of the Platt scaling \citep{guo2017calibration}.
Histogram binning is a non-parametric approach that sorts the data into finitely many bins and rescales the confidences per bin \citep{zadrozny2001obtaining, zadrozny2002transforming, naeini2015obtaining}.
However, traditional approaches do not provide theoretical guarantees on the predicted confidences.
%
% Uncertainty estimation with guarantee
There has been work on predicting confidence sets (\ie predict a set of labels instead of a single label) with theoretical guarantees~\citep{Park2020PAC}, but this approach does not provide the confidence of the most likely prediction, as is often desired. There has also been work providing guarantees on the overall calibration error \citep{kumar2019verified}, but this approach does not provide per-prediction guarantees.
%in learning theory~\citep{vapnik1995nature}. 

% fast inference
There has been work speeding up DNN inference~\citep{hinton2015distilling}. 
%In particular, several pre-trained DNN models (\eg AlexNet and ResNet) can be composed to improve the inference time by dynamically compose the models through confidence feedback. 
One approach is to allow intermediate layers to be dynamically skipped~\citep{teerapittayanon2016branchynet,figurnov2017spatially,wang2018skipnet}, which can be thought of as composing multiple models that share a backbone.
%The most related work is BranchyNet \citep{}, which composes models to form a tree, \ie the inference path from a root to each branch maps to one model, to reduce computational time by sharing network weights.
Unlike our approach, they do not provide guarantees on the accuracy of the composed model.
%but our approach makes such theoretical guarantee while reducing inference time. 

% safe planning
There has also been work on safe learning-based control~\citep{7039601,8493361,bastani2019safe, 9196867, inproceedings, Alshiekh2018SafeRL}; however, these approaches are not applicable to perception-based control.
The most closely related work is~\cite{dean2019robust}, which handles perception, but they are restricted to known linear dynamics. 
%Alternatively, some methods utilize gaussian process \citep{7039601} or Hamilton–Jacobi reachability methods \citep{8493361} to estimate uncertainty or reachable set at runtime. However, these methods suffer from curse of dimensionality and therefore are not applicable to high-dimensional data.
%To avoid this, some methods directly estimate the safety at each step. Specifically, \citep{bastani2019safe, 9196867, inproceedings, Alshiekh2018SafeRL},  synthesize two controllers, namely the optimal controller and the backup controller. The optimal controller accomplishes good performance while the backup controller guarantees PAC safety. These methods require online simulations to check the safety, which might take much time at complex environments.

%% !TEX root = ../pac_cal_iid.tex

%%
%% problem/approach
%%
\section{PAC Confidence Prediction}
\label{sec:algo}

In this section, we begin by formalizing the PAC confidence coverage prediction problem; then, we describe our algorithm for solving this problem based on histogram binning.

%%
%% problem
%\subsection{Background on Calibrated Prediction}

\textbf{Calibrated prediction.}
%\label{cons_conf:sec:prob}
%
%% notations
Let $x\in\Xs$ be an example and $y\in\Ys$ be one of a finite label set,
%Thus, an example and a label are denoted by $x \in \Xs$  and $y \in \Ys$, respectively.
and let $D$ be a distribution over $\Xs \times \Ys$. A \emph{confidence predictor} is a model $\fh: \Xs \rightarrow \Ps_\Ys$, where $\Ps_\Ys$ is the space of probability distributions over labels. In particular, $\fh(x)_y$ is the predicted confidence that the true label for $x$ is $y$. We let $\yh: \Xs \rightarrow \Ys$ be the corresponding \emph{label predictor}---\ie $\yh(x) \coloneqq \arg\max_{y \in \Ys} \fh(x)_y$---and let $\ph:\Xs \rightarrow \realnum_{\geq 0}$ be corresponding \emph{top-label confidence predictor}---\ie $\ph(x) \coloneqq \max_{y \in \Ys} \fh(x)_y$.
%
%% motivation
While traditional DNN classifiers are confidence predictors, a naively trained DNN
%outputs label prediction as well as confidence prediction for the label prediction. However, the confidence prediction
is not reliable---\ie predicted confidence does not match to the true confidence; recent work has studied heuristics for improving reliability~\citep{guo2017calibration}. In contrast, our goal is to construct a confidence predictor that comes with theoretical guarantees.

%% definitions
We first introduce the definition of calibration~\citep{degroot1983comparison,zadrozny2002transforming,park2020calibrated}---\ie what we mean for a predicted confidence to be ``correct''.
%% definition: well-calibrated classifier, mutli-class
%In particular, we define a confidence predictor $\fh$ to be well-calibrated with respect to distribution $D$ if
%\eqa{
%\Prob_{(x,y)\sim D}\[ y=k \mid f(x)_k = t \] = t
%\qquad\qquad
%(\forall y\in\Ys,~t\in[0,1]).
%\label{eq:cons_conf_cal_def}
%}
%for all $y \in \Ys$ and $t \in[0, 1]$.
%% definition: well-calibrated classifier, top-prediction
In many cases, the main quantity of interest is the confidence of the top prediction.
%While (\ref{eq:cons_conf_cal_def}) provides this guarantee, it is more conservative than necessary.
Thus, we focus on ensuring that the top-label predicted confidence $\ph(x)$ is calibrated~\citep{guo2017calibration}; our approach can easily be extended to providing guarantees on all confidences predicted using $\fh$. Then, we say
a confidence predictor $\fh$ is \emph{well-calibrated} with respect to distribution $D$ if
\eqas{
\Prob_{(x,y)\sim D}\left[ y = \yh(x) \mid \ph(x) = t \right] = t
\qquad\qquad
(\forall t\in[0,1]).
}
That is, among all examples $x$ such that the label prediction $\yh(x)$ has predicted confidence $t=\ph(x)$, $\yh(x)$ is the correct label for exactly a $t$ fraction of these examples. 
%% definition: well-calibrated classifier, equivalent form
Using a change of variables~\citep{park2020calibrated}, $\fh$ being well-calibrated is equivalent to
\begin{align}
\label{eqn:wellcalibration}
\ph(x)=c_{\fh}^{*}(x)\coloneqq\Prob_{(x',y')\sim D}\left[ y' = \yh(x') \mid \ph(x') = \ph(x) \right]
\qquad\qquad
(\forall x\in\Xs).
\end{align}
Then, the goal of well-calibration is to make $\ph$ equal to $c_{\fh}^{*}$. Note that $\fh$ appears on both sides of the equation $\ph(x)=c_{\fh}^{*}(x)$---implicitly in $\ph$---which is what makes it challenging to satisfy. Indeed, in general, it is unlikely that (\ref{eqn:wellcalibration}) holds exactly for all $x$. Instead, based on the idea of histogram binning~\citep{zadrozny2001obtaining}, we consider a variant where we partition the data into a fixed number of bins and then construct confidence coverages separately for each bin. In particular, consider $K$  bins $B_1, \dots, B_K\subseteq[0,1]$, where
$B_1=[0,b_1]$ and $B_k=(b_{k-1},b_k]$ for $k>1$.
%\begin{align*}
%B_k=
%\begin{cases}
%[0,b_1]&\text{if}~k=1 \\
%(b_{k-1},b_k]&\text{otherwise}.
%\end{cases}
%\end{align*}
Here, $K$ and $0\le b_1\le \dots \le b_{K-1}\le b_K=1$ are hyperparameters. Given $\fh$, let $\kappa_{\fh}:\Xs\to\{1,\dots,K\}$ to denote the index of the bin that contains $\ph(x)$---\ie $\ph(x)\in B_{\kappa_{\fh}(x)}$.
\begin{definition}
\rm
We say $\fh$ is \emph{well-calibrated} for a distribution $D$ and bins $B_1,\dots,B_K$ if
\begin{align}
\label{eqn:wellcalibrationbin}
\ph(x)=c_{\fh}(x)\coloneqq\Prob_{(x',y')\sim D}\left[ y' = \yh(x') \;\middle|\; \ph(x') \in B_{\kappa_{\fh}(x)} \right]
\qquad\qquad
(\forall x\in\Xs),
\end{align}
\end{definition}
where we refer to $c_{\fh}(x)$ as the \emph{true confidence}.
Intuitively, this definition ``coarsens'' the calibration problem across the bins---\ie rather than sorting the inputs $x$ into a continuum of ``bins'' $\ph(x)=t$ for each $t\in[0,1]$ as in (\ref{eqn:wellcalibration}), we sort them into a finite number of bins $\ph(x)\in B_k$; intuitively, we have $c_{\fh}^{*} \approx c_{\fh}$ if the bin sizes are close to zero. It may not be obvious what downstream guarantees can be obtained based on this definition; we provide examples in Sections~\ref{cons_conf:sec:fast} \& \ref{cons_conf:sec:safe}.

%% assumptions
%{\color{red}
%\textbf{Assumptions.} 
%% iid
%}

%% assumption: our practical approximation
%This leads us to estimate the confidence interval of the ideal confidence. However, estimating this ideal confidence requires the infinite number of sample.
\textbf{Problem formulation.}
We formalize the problem of PAC calibration. We focus on the setting where the training and test distributions are identical---\eg we cannot handle adversarial examples or changes in covariate distribution (e.g., common in reinforcement learning).
%% \fh is given but no constraint
Importantly, while we assume a pre-trained confidence predictor $\fh$ is given, we make no assumptions about $\fh$---\eg it can be uncalibrated or heuristically calibrated. If $\fh$ performs poorly, then the predicted confidences will be close to $1/|\Ys|$---\ie express no confidence in the predictions. Thus, it is fine if $\fh$ is poorly calibrated; the important property is that the confidence predictor $\fh$ have similar true confidences.
%{\color{red}\sout{Note that it is fine for the predictions of $\fh$ to be poorly calibrated; the important property is confidences predicted by $\fh$ have similar true confidences.}}\SP{I don't remember the meaning and the purpose of this, but removing only this one line helps to fit the ACK within the page limit.}

The challenge in formalizing PAC calibration is that quantifying over all $x$ in (\ref{eqn:wellcalibrationbin}). One approach is to provide guarantees in expectation over $x$~\citep{kumar2019verified};
%---\ie
%\begin{align*}
%\mathbb{E}_{x\sim D}\left[\|\hat{p}(x)-c_{\fh}(x)\|^2\right]\le\epsilon
%\end{align*}
%for some $\epsilon\in\mathbb{R}_{>0}$.
however, this approach does not provide guarantees for individual predictions.

%% problem statement
Instead, our goal is to find a \emph{set} of predicted confidences that includes the true confidence with high probability. Of course, we could simply predict the interval $[0,1]$, which always contains the true confidence; thus, simultaneously want to make the size of the interval small. To this end, we consider a \emph{confidence coverage predictor} $\Ch:\Xs\to2^{\mathbb{R}}$, where $c_{\fh}(x)\in\Ch(x)$ with high probability. In particular, $\Ch(x)$ outputs an interval $[\underline{c},\overline{c}] \subseteq \realnum$, where $\underline{c} \leq \overline{c}$, instead of a set. We only consider a single interval (rather than disjoint intervals) since one suffices to localize the true confidence $c_{\fh}$.

%To this end, a \emph{confidence coverage predictor} is a model $\Ch:\Xs\to\mathbb{R}_{\ge0}\times\mathbb{R}_{\ge0}$, where for a prediction $(\underline{c},\overline{c}) \coloneqq \Ch(x)$, we want to guarantee $c_{\fh}(x)\in[\underline{c},\overline{c}]$ with high probability. 
%\footnote{{\color{red}We consider a continuous interval rather than disjoint intervals because estimating one continuous interval is enough to localize true confidence $c_{\fh}$.}}
%Implicitly, we consider $\Ch(x)$ outputs an interval---\ie $\Ch(x)=[\underline{c},\overline{c}]$, so we simply want $c_{\fh}(x)\in\Ch(x)$.

We are interested in providing theoretical guarantees for an algorithm used to construct confidence coverage predictor $\Ch$ given a held-out calibration set $Z\subseteq\Xs\times\Ys$. In addition, we assume the algorithm is given a pretrained confidence predictor $\fh$. Thus, we consider $\Ch$ as depending on $Z$ and $\fh$, which we denote by $\Ch(\cdot;\fh,Z)$. Then, we want $\Ch$ to satisfy the following guarantee:
\begin{definition}
\rm
Given $\delta \in\mathbb{R}_{>0}$ and $n \in \naturalnum$, $\Ch$ is \emph{probably approximately correct (PAC)} if for any $D$,
\eqa{
\label{eq:cons_conf_goal}
\Prob_{Z \sim D^{n}} \left[\bigwedge_{x\in\mathcal{X}} c_{\fh}(x) \in \Ch(x;\fh,Z) \right] \geq 1-\delta. 
%\Prob_{Z \sim D^{n}} \left[\bigwedge_{k=1}^K c_{\fh}(x_k) \in \Ch(x_k;\fh,Z) \right] \geq 1-\delta, \qquad \text{where $x_k \in \kappa_{\fh}^{-1}(k)$.}
}
\end{definition}
%%%previous one
%\begin{definition}
%\rm
%Given $\delta\in\mathbb{R}_{>0}$ and $n \in \naturalnum$, $\Ch$ is \emph{probably approximately correct (PAC)} if
%\eqa{
%\label{eq:cons_conf_goal}
%\Prob_{Z\sim D^n,x\sim D}\left[ c_{\fh}(x) \in \Ch(x;\fh,Z) \right]\ge1-\delta.
%}
%\end{definition}
%Unlike PAC generalization bounds, our definition of PAC confidence coverage predictors does not include a value $\epsilon$, intended to capture the idea that a prediction is ``approximately'' correct. In our setting, this idea is implicit by the fact that $\Ch$ outputs a range instead of a single value.
%Intuitively, we can take $\epsilon\in\mathbb{R}_{>0}$ to be a value satisfying $|c_{\fh}(x)-c|\le\epsilon$ for all $c\in\Ch(x;\fh,Z)$  .
Note that $c_{\hat{f}}$ depends on $D$. Here, ``approximately correct'' technically refers to the mean of $\hat{C}(x;\hat{f},Z)$, which is an estimate of $c_{\hat{f}}(x)$; the interval captures the bound $\epsilon$ on the error of this estimate; see Appendix~\ref{sec:pac_dis} for details. Furthermore, the conjunction over all $x\in\mathcal{X}$ may seem strong. We can obtain such a guarantee due to our binning strategy: the property $c_{\hat{f}}(x)\in\hat{C}(x;\hat{f},Z)$ only depends on the bin $B_{\kappa_{\hat{f}}(x)}$, so the conjunction is really only over bins $k\in\{1,...,K\}$.

%We briefly explain the connection to PAC learning theory. First, note that we can represent $\Ch$ as a confidence interval around the empirical estimate of $c_{\fh}(x)$---\ie $\ch_{\fh}(x) \coloneqq  \sum_{(x', y) \in S_x} \mathbbm{1}( y = \yh(x')) /|S_x|$, where $S_x = \{ (x', y) \mid \ph(x') \in B_{\kappa_{\fh}(x)} \}$. Then, we can write $\Ch(x) = [\ch_{\fh}(x) - \underline{\epsilon}_{x}, \ch_{\fh}(x) + \bar{\epsilon}_{x} ]$. In this case, (\ref{eq:cons_conf_goal}) is equivalent to
%\begin{align*}
%\Prob_{Z \sim D^{n}} \left[ \bigwedge_{x \in \Xs} \ch_{\fh}(x) - \underline{\epsilon}_{\kappa_{\fh}(x)} \le  c_{\fh}(x) \le \hat{c}_{\fh}(x) + \bar{\epsilon}_{\kappa_{\fh}(x)} \right] \geq 1 - \delta, %\qquad (\forall x\in\mathcal{X}),
%\end{align*}
%for some $\underline{\epsilon}_1,\bar{\epsilon}_1,\dots,\underline{\epsilon}_K,\bar{\epsilon}_K$. This bound is similar to concentration inequalities such as Hoeffding's inequality, where ``approximately'' refers to the bound on the error of our empirical estimate $\ch_{\fh}(x)$ compared to the true value $c_{\fh}(x)$, and ``probably'' refers to the fact that this error bound holds with high probability over the training data $Z\sim D^n$. Finally, we provide a more formal connection to PAC learning theory in Appendix~\ref{sec:pac_dis}.

%%
%%
%\subsection{Background on Clopper-Pearson Confidence Intervals}

\textbf{Algorithm.}
%\label{cons_conf:sec:alg}
%
%% overview
We propose a confidence coverage predictor that satisfies the PAC property. The problem of estimating the confidence interval $\Ch(x)$ of the binned true confidence $c_{\fh}(x)$ is closely related to the binomial proportion confidence interval estimation;
%% summary
consider a Bernoulli random variable $b\sim B\coloneqq\text{Bernoulli}(\theta)$ for any $\theta \in [0, 1]$, where $b=1$ denotes a success and $b=0$ denotes a failure, and $\theta$ is unknown. Given a sequence of observations $b_{1:n}\coloneqq(b_1,\dots,b_n)\sim B^n$, the goal is to construct an interval $\hat{\Theta}(b_{1:n})\subseteq\mathbb{R}$ that includes $\theta$ with high probability---\ie
\begin{align}
\label{eq:binomguarantee}
\Prob_{b_{1:n}\sim B^n}\left[\theta\in\hat{\Theta}(b_{1:n})\right]\ge1-\alpha,
\end{align}
where $\alpha\in\mathbb{R}_{>0}$ is a given confidence level. In particular, the Clopper-Pearson interval
\eqas{
\hat{\Theta}_{\text{CP}}(b_{1:n};\alpha) \coloneqq \left[
\inf_\theta \left\{ \theta \;\middle|\; \Prob_\theta \left[ S \geq s \right] \geq \frac{\alpha}{2} \right\}, ~
\sup_\theta \left\{ \theta \;\middle|\; \Prob_\theta \left[ S \leq s \right] \geq \frac{\alpha}{2} \right\}
\right],
}
guarantees (\ref{eq:binomguarantee})~\citep{clopper1934use,brown2001interval},
%to provide the interval  where its coverage probability satisfies a desired coverage probability by the specified confidence level \citep{}, \ie
%\eqas{
%	\Prob\[ \theta \in \hat{\Theta}_{\text{CP}}(n, s; \alpha) \] \geq 1 - \alpha,
%}
%where 
%$\theta$ is the probability of success,
%$n$ is the number of trials,
%$s$ is the number of observed successes that follows the binomial distribution $\text{Bin}(n, \theta)$, and 
%$1 - \alpha$ is the confidence level.
%This Clopper-Pearson interval is represented as follows:
where $s=\sum_{i=1}^{n}b_i$ is the number of observed successes, $n$ is the number of observations, and $S$ is a Binomial random variable $S\sim\text{Binomial}(n, \theta)$. Intuitively, the interval is constructed such that the number of observed success falls in the region with high-probability for any $\theta$ in the interval.
%
%% implementation of the Clopper-Pearson
The following expression is equivalent due to the relationship between the Binomial and Beta distributions \citep{hartley1951chart, brown2001interval}---\ie $\Prob_\theta[ S \geq s] = I_\theta(s, n-s+1)$, where $I_\theta$ is the CDF of Beta$(s, n-s+1)$:
\eqas{
\hat{\Theta}_{\text{CP}}(b_{1:n};\alpha) = \left[
\frac{\alpha}{2}\text{ quantile of } \text{Beta}(s, n-s+1),~ 
\(1-\frac{\alpha}{2}\)\text{ quantile of } \text{Beta}(s+1, n-s)
\right].
}
%%
%%
%\subsection{Algorithm}
%% overview
%We propose a confidence coverage predictor that satisfies the PAC property.
%includes the binned true confidence with high probability
%It follows that
%binned true confidence is denoted by
%\begin{align*}
%c_{\fh}(x) \coloneqq \Exp_{(x,y)\sim D}\left[ y=\yh(x) \Bigm\vert x\in B_{\kappa_{\fh}(x)} \right].
%\end{align*}
%where $B$ is a bin that includes $\ph(x)$ (\ie $\ph(x) \in B$).
%
%% step1: decomposition of objective
%First, we decompose the coverage probability in (\ref{eq:cons_conf_goal}) using the independence of the bins:
%\eqa{
%\Prob_{x\sim D}\left[ c_{\fh}(x) \in \Ch(x; \fh, Z) \right]
%= \sum_{k=1}^{K} \Prob_{x\sim D}\left[ c_{\fh}(x) \in \Ch(x; \fh, Z) %\Bigm\vert \kappa_{\fh}(x) = k \right],
%\label{eq:cons_conf_decomp}
%}
%where 
%$x_k$ is any $x$ such that $\ph(x) \in B_k$, 
%and
%$Z_{n, k}$ is the set of labeled examples where the predicted confidence of the examples are included in the $k$-th bin, \ie 
%$Z_{n, k} \coloneqq \{ (x, y) \in Z_n \mid \ph(x) \in B_k \}$.
%
%% step2: apply Clopper-Pearson interval with an unnormalized empirical estimate of the true confidence
Now, for each of the $K$ bins, we apply $\hat{\Theta}_{\text{CP}}$ with confidence level $\alpha=\frac{\delta}{K}$---\ie
%\eqas{
%\Ch( x; \fh, Z, \delta ) =
%\Ch_{\text{CP}} \( \bar{c}_{\fh}(x_k; Z_{n, k}); \frac{\alpha}{K} \),
%\hat{\Theta}_{\text{CP}}\left(W_{\kappa_{\fh}(x)}; \frac{\delta}{K} \right),
%}
%where 
%$\Bs_{\fh, x, Z} = \{ (x', y') \in Z \mid  \ph(x) \in B \text{~and~} \ph(x') \in B \text{~for some $B$} \}$ is a set of labeled examples that is associated with the bin $B$,
\begin{align*}
\Ch( x; \fh, Z, \delta ) \coloneqq
%\Ch_{\text{CP}} \( \bar{c}_{\fh}(x_k; Z_{n, k}); \frac{\alpha}{K} \),
\hat{\Theta}_{\text{CP}}\left(W_{\kappa_{\fh}(x)}; \frac{\delta}{K} \right)
~\text{where}~
W_k\coloneqq\left\{\mathbbm{1}(\yh(x)=y)\Bigm\vert(x,y)\in Z\text{ s.t. }\kappa_{\fh}(x)=k\right\}.
\end{align*}
Here, $W_k$ is the set of observations of successes vs. failures corresponding to the subset of labeled examples $(x,y)\in Z$ such that $\ph(x)$ falls in the bin $B_k$, where a success is defined to be a correct prediction $\yh(x)=y$.
%$n' \coloneqq | \Bs_{\fh, x, Z} |$, and
%$s' \coloneqq \sum_{(x', y') \in \Bs_{\fh, x_k, Z_{n, k}}} \mathbbm{1}\[ y' =  \yh(x') \]$ is the unnormalized estimate of the binned true confidence.
%% our solution
%Thus, the proposed confidence interval estimator is as follows:
%\eqas{
%	\Ch_K(x; \fh, Z_n, \alpha) \coloneqq \Ch_k \( x_k; \fh, Z_{n, k}, \frac{\alpha}{K} \),
%}
%where $k$ satisfies $\ph(x) \in B_k$.
%
%% whether the proposed estimator satisfies the correctness constraint.
We note that for efficiency, the confidence interval for each of the $K$ bins can be precomputed. Our construction of $\Ch$ satisfies the following; see Appendix~\ref{sec:thmmainproof} for a proof:
\begin{theorem}
\label{thm:main}
Our confidence coverage predictor $\Ch$ is PAC for any $\delta \in \realnum_{>0}$ and $n \in \naturalnum$.
%---i.e., given $\delta\in\mathbb{R}_{>0}$,
%\eqas{
%\Prob \left[ c_{\fh}(x) \in \Ch \( x; \fh, Z, \delta \) \right] \geq 1 - \delta.
%}
\end{theorem}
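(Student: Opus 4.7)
The plan is to reduce the seemingly strong universal quantifier over all $x\in\mathcal{X}$ to a finite union bound over the $K$ bins, and then apply the Clopper--Pearson guarantee on each bin.

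First, I would observe that both the true confidence $c_{\hat{f}}(x)$ and the predicted interval $\hat{C}(x;\hat{f},Z,\delta)$ depend on $x$ only through the bin index $\kappa_{\hat{f}}(x)\in\{1,\dots,K\}$: the definition of $c_{\hat{f}}(x)$ in (\ref{eqn:wellcalibrationbin}) conditions on $\hat{p}(x')\in B_{\kappa_{\hat{f}}(x)}$, and $\hat{C}$ is built from $W_{\kappa_{\hat{f}}(x)}$. Let $\theta_k \coloneqq \Prob_{(x',y')\sim D}[\hat{y}(x')=y' \mid \hat{p}(x')\in B_k]$ denote the per-bin true confidence (set $\theta_k$ arbitrarily, e.g.\ to $1/2$, if $\Prob[\hat{p}(x')\in B_k]=0$; in that case no sample ever lands in bin $k$ and the statement is vacuous). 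Then
\begin{align*}
\bigwedge_{x\in\mathcal{X}} c_{\hat{f}}(x)\in\hat{C}(x;\hat{f},Z,\delta)
\quad\Longleftrightarrow\quad
\bigwedge_{k=1}^{K} \theta_k \in \hat{\Theta}_{\text{CP}}\!\left(W_k;\tfrac{\delta}{K}\right),
\end{align*}
so it suffices to control the finite conjunction on the right.

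Next, I would fix a bin $k$ and condition on the random cardinality $n_k\coloneqq |W_k|$. Given that a calibration point $(x,y)\sim D$ has $\hat{p}(x)\in B_k$, the indicator $\mathbbm{1}(\hat{y}(x)=y)$ is exactly a Bernoulli$(\theta_k)$ draw by definition of $\theta_k$. Since the $n$ draws in $Z$ are i.i.d., conditionally on the assignment of each sample to a bin, the indicators in $W_k$ are i.i.d.\ Bernoulli$(\theta_k)$ of length $n_k$. The Clopper--Pearson guarantee then gives
\begin{align*}
\Prob\!\left[\theta_k \in \hat{\Theta}_{\text{CP}}\!\left(W_k;\tfrac{\delta}{K}\right) \;\middle|\; n_k\right] \;\geq\; 1-\tfrac{\delta}{K}
\end{align*}
for every realization of $n_k$ (including $n_k=0$, where one takes the CP interval to be $[0,1]$, which trivially contains $\theta_k$). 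Marginalizing over $n_k$ preserves the bound.

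Finally, a union bound over the $K$ bins yields
\begin{align*}
\Prob_{Z\sim D^n}\!\left[\bigvee_{k=1}^{K} \theta_k \notin \hat{\Theta}_{\text{CP}}\!\left(W_k;\tfrac{\delta}{K}\right)\right] \;\leq\; K\cdot \tfrac{\delta}{K} \;=\; \delta,
\end{align*}
which, combined with the equivalence above, gives the PAC guarantee (\ref{eq:cons_conf_goal}). The only subtle step is the second one: carefully arguing that even though the per-bin sample counts $n_k$ are random, the Clopper--Pearson coverage holds for each fixed $n_k$ (including the trivial case $n_k=0$), so it survives marginalization. Everything else is a clean reduction plus union bound.
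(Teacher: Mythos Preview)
Your proposal is correct and follows essentially the same route as the paper: reduce the conjunction over $x\in\mathcal{X}$ to a conjunction over the $K$ bins, invoke the Clopper--Pearson coverage guarantee at level $\delta/K$ on each bin, and finish with a union bound. If anything, your argument is more careful than the paper's, which simply asserts the per-bin CP bound without explicitly conditioning on the random bin count $n_k$ or handling the $n_k=0$ case.
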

%% discussion on size
Note that Clopper-Pearson intervals are exact, ensuring the size of $\Ch$ for each bin is small in practice.
%While satisfying the PAC property, the constructed confidence coverage predictor outputs quite small intervals given fixed bins due to the Clopper-Pearson interval---\ie it forms fairly small interval due to the $\inf$ and $\sup$ operations, while satisfying a desired confidence level.
%\begin{theorem}
%\label{thm:main_size}
%Given $K$ bins $B_1, \dots B_K$, our confidence coverage predictor $\Ch$ outputs the smallest intervals for all bins among the coverage predictors that satisfy the PAC property.
%\end{theorem}
%% a useful special case
%Finally, we also consider a useful modification of the proposed confidence interval estimator; in this modification, we consider only a single bin $B=[b_0,b_1]\subseteq[0,1]$, which is not required to cover the entire unit interval. In this case, the confidence coverage predictor is
%\eqas{
%\Ch_B(x; \fh, Z, \delta) =
%\begin{cases}
%\hat{\Theta}_{\text{CP}}(W; \delta)&\text{if }\ph(x)\in B \\
%[0,1]&\text{otherwise},
%\end{cases}
%}
%where
%\begin{align*}
%W=\{\mathbbm{1}(\hat{y}(x)=y)\mid(x,y)\in Z\text{ s.t. }\ph(x)\in B\}.
%\end{align*}
%We can drop the dependence on $x$ as long as we have $\ph(x)\in B$---\ie $\Ch_B(\fh,Z,\delta)=\hat{\Theta}(W;\delta)$. A straightforward modification of our proof of Theorem~\ref{thm:main} shows that $\Ch_B$ satisfies the PAC property.
Finally, an important special case is when there is a single bin $B=[0,1]$---\ie
\eqas{
\Ch_0(x; \fh, Z', \delta) \coloneqq \hat{\Theta}_{\text{CP}}(W; \delta)
\qquad\text{where}\qquad
W\coloneqq\{\mathbbm{1}(\hat{y}(x')=y')\mid (x',y')\in Z'\}.
}
Note that $\Ch_0$ does not depend on $x$, so we drop it---\ie $\Ch_0(\fh,Z',\delta)\coloneqq\hat{\Theta}_{\text{CP}}(W;\delta)$---\ie $\Ch_0$ computes the Clopper-Pearson interval over $Z'$, which is a subset of the original set $Z$.

%where  we assume $\ph(x) \in B$, thus dropping the dependency on $x$ of the estimator, \ie $\Ch_B(\fh, Z, \alpha) \coloneqq \Ch_B(x; \fh, Z, \alpha)$,
%\eqas{
%	n' &= \left| \left\{ (x', y') \in Z \;\middle|\; \ph(x') \in B \right\} \right|, \text{~and}\\
%	s' &= \left| \left\{ (x', y') \in Z \;\middle|\; \ph(x') \in B, \yh(x') = y' \right\} \right|.
%}
%This confidence coverage predictor is useful when we only care about whether the predicted probability $\ph(x)$ is above a given threshold.

%% intro apps
%In the following sections, we introduce possible applications of the proposed conservative confidence estimator. In particular, we exploit the one-bin estimator $\Ch_B(\fh, Z, \alpha)$, and choose a proper bin $B$ where the conservative confidence obtained from the proposed estimator satisfies application dependent constraints. 
%% !TEX root = ../pac_cal_iid.tex

%%
%% app
%%
\section{Application to Fast DNN Inference}
\label{cons_conf:sec:fast}
%% overview
A key application of predicted confidences is to perform \emph{model composition} to improve the running time of DNNs without sacrificing accuracy. The idea is to use a fast but inaccurate model when it is confident in its prediction, and switch to an accurate but slow model otherwise~\citep{bolukbasi2017adaptive}; we refer to the combination as the \emph{composed model}. To further improve performance, we can have the two models share a backbone---\ie the fast model shares the first few layers of the slow model~\citep{teerapittayanon2016branchynet}.
%---\ie take an intermediate layer of the slow model and train a prediction layer based on this intermediate representation; this approach easily extends to multiple intermediate prediction layers. If the fast model is underconfident, we simply continue computation in the slow model.
We refer to the decision of whether to skip the slow model as the \emph{exit condition}; then, our goal is to construct confidence thresholds for exit conditions in a way that provides theoretical guarantees on the overall accuracy.

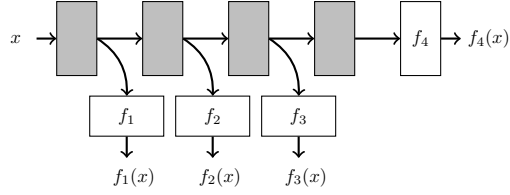
\begin{figure}
\centering
\begin{tikzpicture}[ampersand replacement=\&, scale=0.7, every node/.style={scale=0.7}]
%% setups
\tikzstyle{myblock} = [draw, rectangle,minimum height=4em,minimum width=2em, text width=1.5em, align=center]
\tikzstyle{mytext} = [rectangle,thick,minimum height=2em,minimum width=2em, text width=1.5em, align=center]
\tikzstyle{myconnector} = [->,thick]

%% blocks
\node[mytext] (input) {$x$}; 
\node[myblock,fill=gray!50, right of=input, xshift=1ex] (layer1) {}; 
\node[myblock,fill=gray!50, right of=layer1, xshift=4ex] (layer2) {}; 
\node[myblock,fill=gray!50, right of=layer2, xshift=4ex] (layer3) {}; 
\node[myblock,fill=gray!50, right of=layer3, xshift=4ex] (layer4) {}; 
\node[myblock, right of=layer4, xshift=4ex] (branch4) {$f_4$}; 
\node[mytext, right of=branch4, xshift=1ex] (output) {$f_4(x)$};

\node[myblock, below of=layer1, xshift=6ex, yshift=-3ex, rotate=90] (branch1) {\rotatebox{-90}{$f_1$}}; 
\node[myblock, below of=layer2, xshift=6ex, yshift=-3ex, rotate=90] (branch2) {\rotatebox{-90}{$f_2$}}; 
\node[myblock, below of=layer3, xshift=6ex, yshift=-3ex, rotate=90] (branch3) {\rotatebox{-90}{$f_3$}}; 

\node[mytext, below of=branch1, yshift=-1ex] (out_br1) {$f_1(x)$}; 
\node[mytext, below of=branch2, yshift=-1ex] (out_br2) {$f_2(x)$}; 
\node[mytext, below of=branch3, yshift=-1ex] (out_br3) {$f_3(x)$}; 

\draw[myconnector] (input) -- (layer1);
\draw[myconnector] (layer1) -- (layer2);
\draw[myconnector] (layer2) -- (layer3);
\draw[myconnector] (layer3) -- (layer4);
\draw[myconnector] (layer4) -- (branch4);
\draw[myconnector] (branch4) -- (output);
\draw [myconnector] (layer1.east) to [bend left] (branch1.east);
\draw [myconnector] (layer2.east) to [bend left] (branch2.east);
\draw [myconnector] (layer3.east) to [bend left] (branch3.east);
\draw[myconnector] (branch1) -- (out_br1);
\draw[myconnector] (branch2) -- (out_br2);
\draw[myconnector] (branch3) -- (out_br3);
\end{tikzpicture}
\vspace{-2ex}
\caption{A composed model in a cascading way for $M=4$.}
\label{cons_conf:fig:fast:model_diagram}
\end{figure}

%% cascading classifier
\textbf{Problem setup.}
The early-stopping approach for fast DNN inference can be formalized as a sequence of branching classifiers organized in a cascading way---\ie
\eqas{
\yh_{C}(x; \gamma_{1:M-1}) \coloneqq 
\begin{cases}
%\yh_1(x) &\text{~if~} \ph_1(x) \geq \gamma_1 \\
%\yh_2(x) &\text{~if~} \ph_1(x) < \gamma_1 \land \ph_2(x) \geq \gamma_2 \\
%&\vdots \\
%\yh_{M-1}(x) &\text{~if~} \bigwedge_{m=1}^{M-2} \( \ph_{m}(x) < \gamma_m \) \land \ph_{M-1}(x) \geq \gamma_{M-1} \\
\yh_m(x) &\text{if~} \bigwedge_{i=1}^{m-1} \( \ph_{i}(x) < \gamma_i \) \land \( \ph_{m}(x) \geq \gamma_{m} \) ~ (\forall m \in \{1, ..., M-1\}) \\
\yh_{M}(x) &\text{otherwise},
\end{cases}
}
where $M$ is the number of branches, $\fh_m$ is the confidence predictor, and $\yh_m$ and $\ph_m$ are the associated label and top-label confidence predictor, respectively. For conciseness, we denote the exit condition of the $m$th branch by $d_m$ (\ie $d_m(x) \coloneqq \mathbbm{1}(\bigwedge_{i=1}^{m-1} \( \ph_{i}(x) < \gamma_i \) \land \(\ph_m(x) \geq \gamma_m\))$) with thresholds $\gamma_1, \dots, \gamma_m\in[0,1]$. The $\fh_m$ share a backbone and are trained in the standard way; see Appendix \ref{sec:additionaldiscussion_cascading_training} for details. Figure \ref{cons_conf:fig:fast:model_diagram} illustrates the composed model for $M=4$; the gray area represents the shared backbone. We refer to an overall model composed in this way as a \emph{cascading classifier}.
%% training each classifiers

%% problem: fast inference while maintaining the original accuracy
\textbf{Desired error bound.}
Given $\xi\in\mathbb{R}_{>0}$, our goal is to choose $\gamma_{1:M-1} \coloneqq (\gamma_1, \dots, \gamma_{M-1})$
%% solution
so the error difference of the cascading classifier $\yh_C$ and the slow classifier $\yh_M$ is at most $\xi$---\ie
\begin{align}
\label{eqn:fast_inference_constraint}
%\Prob_{(x,y)\sim D}[\yh_C(x)\neq y\wedge\yh_M(x)=y]\le\xi.
p_{\text{err}} \coloneqq \Prob_{(x,y)\sim D}\left[ \yh_C(x) \neq y \right] - \Prob_{(x,y)\sim D}\left[ \yh_M(x) \neq y \right] \leq \xi.
\end{align}
%% provides high-level intuition
%To obtain the desired error bound, we exit at the $m$th branch if $\yh_m$ correctly classifies an example that is also correctly classified by $\yh_M$, allowing for at most $\xi$ fraction of errors total.
To obtain the desired error bound, an example $x$ exits at the $m$th branch if $\yh_m$ is likely to classify $x$ correctly, allowing for at most $\xi$ fraction of errors total.
Intuitively, if the confidence of $\hat{y}_m$ on $x$ is sufficiently high, then $\hat{y}_m(x) = y$ with high probability. In this case, $\hat{y}_M$ either correctly classifies or misclassifies the same example; if the example is misclassified, it contributes to decrease $p_{\text{err}}$; otherwise, we have $\hat{y}_m(x) = y = \hat{y}_M(x)$ with high probability, which contributes to maintain $p_{\text{err}}$.

\textbf{Fast inference.}
%To guarantee the error of the cascading classifier, any $\xi_m$ is okay as long as the achievable relative errors satisfy the condition (\ref{cons_conf:eq:fast:acc_constraint2}). However,
To minimize running time, we prefer to allow higher error rates at the lower branches---\ie we want to choose $\gamma_m$ as small as possible at lower branches $m$.
%ideally we have $\xi_i \geq \xi_j$ whenever $i<j$.

%% our choice on the guaranteed exit conditions
\textbf{Algorithm.}
Our algorithm takes prediction branches $\fh_m$ (for $m\in\{1,\dots,M\}$), the desired relative error $\xi \in [0, 1]$, a confidence level $\delta \in [0, 1]$, and a calibration set $Z\subseteq\Xs\times\Ys$, and outputs $\gamma_{1:M-1}$ so that (\ref{eqn:fast_inference_constraint}) holds with probability at least $1-\delta$. It iteratively chooses the thresholds from $\gamma_1$ to $\gamma_{M-1}$; at each step, it chooses $\gamma_m$ as small as possible subject to $p_{\text{err}}\le\xi$.
%---\ie
%\eqa{
%\gamma_m=\operatorname*{\arg\min}_{\gamma\in[0,1]}~\gamma\qquad\text{subj. to}\qquad p_{\text{err}}\le\xi.
%\sum_{i=1}^m\xi_i\le\xi.
%e_m(\gamma_m) - e_M(\gamma_m) \leq \xi_m, \nonumber
%\label{cons_conf:eq:fast:acc_constraint:old}
%}
Note that $\gamma_m$ implicitly appears in $p_{\text{err}}$ in the constraint due to the dependence of $d_m(x)$ on $\gamma_m$. The challenge is enforcing the constraint since we cannot evaluate it. To this end, let
%and the gap between $\xi_m$ and $\hat{\xi}_m$ is majorly due to the quantization error from histogram binning.
%% our choice on the guaranteed exit conditions
%\textbf{Error guarantee}.
%Now, letting the error $e_m$ of the $m$th branch and the error $e_M$ of the full classifier be
\eqas{
e_m &\coloneqq \Prob_{(x,y)\sim D}\left[ \yh_m(x) \neq y \wedge \yh_m(x) \neq \yh_M(x) \wedge d_m(x)=1 \right] \\
e_m' &\coloneqq \Prob_{(x,y)\sim D}\left[ \yh_M(x) \neq y \wedge \yh_m(x) \neq \yh_M(x) \wedge d_m(x)=1 \right],
}
then it is possible to show that $p_{\text{err}}=\sum_{m=1}^{M-1} e_m-e_m'$ (see proof of Theorem~\ref{thm:fast} in Appendix~\ref{sec:thmfastproof}).
%\eqa{
%e_m - e_m' \leq \xi_m. \label{eq:cons_conf_fast_constraint}
%}
%
%To this end, we greedily choose the $\xi_i$---\ie
%\eqa{
%\xi_1 = \xi \text{~~~~and~~~~} \xi_m = \xi - \sum_{j=1}^{m-1} \hat{\xi}_j, \label{cons_conf_eq:fast_rho_selection}
%}
%where $2 \leq m \leq M-1$ and $\hat{\xi}_j \in [0, 1]$ satisfies $\xi_j \geq \hat{\xi}_j$.
%% interpretation
%This scheme allows the remaining relative error for the $m$-th branch that have not used in the previous branches. Thus, the constraint (\ref{cons_conf:eq:fast:acc_constraint2}) satisfies by construction.
%
%we sequentially choose the branch classifier parameter $\gamma_m$ for $1 \leq m \leq M-1$ as follows:
%where $\xi_m$ is chosen by (\ref{cons_conf_eq:fast_rho_selection}).
%% application of our approach 
Then, we
%use our confidence coverage estimator $\Ch_0$ and the Clopper-Pearson interval to compute
can compute bounds on $e_m$ and $e_m'$ using the following:
\eqas{
\Prob\left[ \yh_m(x) = y \mid \yh_m(x) \neq \yh_M(x) \wedge d_m(x)=1 \right]
&\in [\text{\underbar{$c$}}_m, \bar{c}_m] 
\coloneqq \Ch_0\(\fh_m, Z_m, \frac{\delta}{3(M-1)} \) \\
\Prob\left[ \yh_M(x) = y \mid \yh_m(x) \neq \yh_M(x) \wedge d_m(x)=1 \right]
&\in [\text{\underbar{$c$}}_m', \bar{c}_m']
\coloneqq \Ch_0\(\fh_M, Z_m, \frac{\delta}{3(M-1)} \) \\
\Prob\left[ \yh_m(x) \neq \yh_M(x) \wedge d_m(x)=1 \right] 
&\in [\text{\underbar{$r$}}_m, \bar{r}_m] 
\coloneqq \hat{\Theta}_{\text{CP}} \(W_m; \frac{\delta}{3(M-1)} \),
}
where 
\begin{align*}
Z_m &\coloneqq \{ (x,y) \in Z \mid \yh_m(x) \neq \yh_M(x) \wedge d_m(x) = 1 \} \\
W_m &\coloneqq \{ \mathbbm{1}(\yh_m(x) \neq \yh_M(x) \wedge d_m(x) = 1) \mid (x,y) \in Z \}.
\end{align*}
%$n_m \coloneqq n$, and
%$s_m \coloneqq \left| \Bs_m \right|$.
%From this, the constraint is rewritten as follows: 
%\eqa{
%e_m(\gamma_m) - e_M(\gamma_m) \leq \hat{\xi}_m \coloneqq \bar{c}_m\bar{r}_m - \text{\underbar{$c$}}_M \text{\underbar{$r$}}_m \leq \xi_m, \label{cons_conf:eq:fast:constraint_final}
%}
%where the inequality holds with probability at least $1 - \alpha$.
Thus, we have $e_m\le \bar{c}_m\bar{r}_m$ and $e_m'\ge\underbar{$c$}_m'\underbar{$r$}_m$, in which case it suffices to sequentially solve
\eqa{
\gamma_m=\operatorname*{\arg\min}_{\gamma\in[0,1]}~\gamma\qquad\text{subj. to}\qquad\sum_{i=1}^m\bar{c}_i\bar{r}_i-\underbar{$c$}_i'\underbar{$r$}_i\le\xi.
%e_m(\gamma_m) - e_M(\gamma_m) \leq \xi_m, \nonumber
\label{cons_conf:eq:fast:acc_constraint}
}
%\ie the constraint in (\ref{cons_conf:eq:fast:acc_constraint}) is a sufficient condition for the constraint in (\ref{cons_conf:eq:fast:acc_constraint:old}) to hold.
Here, $\bar{c}_m$, $\bar{r}_m$, $\underbar{$c$}_m$, and $\underbar{$r$}_m$ are implicitly a function of $\gamma$, which we can optimize using line search. We have the following guarantee; see Appendix~\ref{sec:thmfastproof} for a proof:
%%
%% error guarantee
%Finally, The following theorem shows the proposed exit condition scheme (\ref{cons_conf:eq:fast:acc_constraint}) guarantees the desired relative error for the cascading classifier.
\begin{theorem}
\label{thm:fast}
We have $p_{\text{err}}\le\xi$ with probability at least $1 - \delta$ over $Z\sim D^n$.
%with $\gamma_{1:M-1}$ constructed using (\ref{cons_conf:eq:fast:acc_constraint}) satisfies
%\eqas{
%\Prob_{(x,y)\sim D}\[ \yh_C(x) \neq y \] \leq \Prob_{(x,y)\sim D}\[ \yh_M(x) \neq y \] + \xi.
%}
\end{theorem}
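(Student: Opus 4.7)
The plan is to combine an exact algebraic decomposition of the error gap $p_{\text{err}}$ with a union bound over the Clopper--Pearson intervals used to construct the thresholds.

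First I would establish the identity $p_{\text{err}} = \sum_{m=1}^{M-1}(e_m - e_m')$. Because the events $\{d_m(x)=1\}_{m=1}^{M}$ partition $\mathcal{X}$ (branch $m$ exits on disjoint regions, and $d_M$ covers everything that survives all previous thresholds), I can split each of $\Prob_{(x,y)\sim D}[\hat{y}_C(x)\neq y]$ and $\Prob_{(x,y)\sim D}[\hat{y}_M(x)\neq y]$ as a sum over $m$ and subtract term by term. The $m=M$ summand vanishes because $\hat{y}_C$ coincides with $\hat{y}_M$ whenever $d_M(x)=1$. For each $m<M$, further splitting on $\{\hat{y}_m=\hat{y}_M\}$ versus $\{\hat{y}_m\neq\hat{y}_M\}$ cancels the agreement piece (both classifiers carry identical error indicators there), and what remains in the $\{\hat{y}_m\neq\hat{y}_M\}$ part is exactly $e_m - e_m'$.

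Next I would apply the Clopper--Pearson guarantee (the single-bin special case $\hat{C}_0$ of $\hat{C}$, whose correctness is precisely \thmref{thm:main} with $K=1$) to each of the three probabilities the algorithm estimates at every branch $m\in\{1,\dots,M-1\}$: the conditional probability that $\hat{y}_m$ is correct given $\{\hat{y}_m\neq\hat{y}_M \wedge d_m=1\}$, the analogous quantity for $\hat{y}_M$, and the unconditional probability $\Prob[\hat{y}_m\neq\hat{y}_M \wedge d_m=1]$. Each interval is constructed at confidence level $\delta/(3(M-1))$, so a union bound over the $3(M-1)$ intervals yields a single event $\mathcal{E}$ of probability at least $1-\delta$ (over $Z\sim D^n$) on which all the bounds $[\underline{c}_m,\bar{c}_m]$, $[\underline{c}_m',\bar{c}_m']$, and $[\underline{r}_m,\bar{r}_m]$ hold simultaneously for every $m$.

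On the event $\mathcal{E}$, I would use the factorization $e_m = \Prob[\hat{y}_m\neq y \mid \hat{y}_m\neq\hat{y}_M \wedge d_m=1]\cdot\Prob[\hat{y}_m\neq\hat{y}_M \wedge d_m=1]$ (and analogously for $e_m'$) to combine the individual interval bounds into an upper bound on $e_m$ and a lower bound on $e_m'$ in terms of the endpoints appearing in constraint~(\ref{cons_conf:eq:fast:acc_constraint}). Summing over $m$ and plugging into the decomposition above yields $p_{\text{err}} \le \sum_{m=1}^{M-1}(\bar{c}_m\bar{r}_m - \underline{c}_m'\underline{r}_m)$, and by construction the sequential line search produces $\gamma_{1:M-1}$ for which this right-hand side is at most $\xi$, completing the proof.

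The main obstacle is the decomposition step: the case analysis has to be set up carefully so the agreement terms cancel and the three quantities that remain are exactly those the algorithm can estimate from $Z$ via independent Clopper--Pearson intervals (in particular, conditioning on the $\{\hat{y}_m\neq\hat{y}_M\}$ event is what lets a single bound on the correctness rate control both $e_m$ and $e_m'$, since at most one of $\hat{y}_m, \hat{y}_M$ can equal $y$ there). Once that identity is in place, the union bound and the feasibility of the line search turn the rest into a short assembly.
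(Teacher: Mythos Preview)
Your proposal is correct and follows essentially the same approach as the paper: you obtain the identity $p_{\text{err}}=\sum_{m=1}^{M-1}(e_m-e_m')$ by partitioning on the branch-exit events and then on $\{\hat y_m=\hat y_M\}$ versus $\{\hat y_m\neq\hat y_M\}$, whereas the paper performs these two splittings in the opposite order, but the algebra and conclusion are identical. You are also a bit more explicit than the paper about the union bound over the $3(M-1)$ Clopper--Pearson intervals, which is exactly what underlies the paper's appeal to constraint~(\ref{cons_conf:eq:fast:acc_constraint}) holding with probability at least $1-\delta$.
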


%%
%% optimality in speed
Moreover, the proposed greedy algorithm (\ref{cons_conf:eq:fast:acc_constraint}) is actually optimal in reducing inference speed when $M=2$. Intuitively, we are always better off in terms of inference time by classifying more examples using a faster model. In particular, we have the following theorem; see Appendix~\ref{sec:thmfastoptimal_proof} for a proof:
\begin{theorem}
\label{thm:fast_optimal}
%If $\gamma_{1:M-1}^{*}$ minimizes (\ref{cons_conf:eq:fast:acc_constraint}), then it produces the fastest cascading classifier among cascading classifiers which satisfy $p_{\text{err}} \leq \xi$.
If $M=2$, $\gamma_{1}^{*}$ minimizes (\ref{cons_conf:eq:fast:acc_constraint}), and the classifiers $\yh_m$ are faster for smaller $m$, then the resulting $\yh_C$ is the fastest cascading classifier among cascading classifiers that satisfy $p_{\text{err}} \leq \xi$.
\end{theorem}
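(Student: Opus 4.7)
The plan is to reduce the claim to a single monotonicity statement: for $M=2$, the expected inference time $T_C(\gamma)$ of the cascading classifier $\yh_C$ is a non-decreasing function of the threshold $\gamma \in [0,1]$. Since (\ref{cons_conf:eq:fast:acc_constraint}) for $m=1$ is a single scalar constraint and $\gamma_1^*$ is defined as the smallest $\gamma$ satisfying it, every other cascading classifier that is PAC-certified via (\ref{cons_conf:eq:fast:acc_constraint}) to satisfy $p_{\text{err}} \le \xi$ must use some $\gamma \ge \gamma_1^*$, and will therefore be at least as slow.

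First I would write the expected inference time in closed form. Let $T_1$ denote the running time of $\yh_1$ (including the shared backbone and the time to produce $\ph_1$), and let $T_{2|1} > 0$ denote the additional time required to evaluate $\yh_2$ after $\yh_1$ has been evaluated; positivity of $T_{2|1}$ is exactly the assumption that $\yh_1$ is strictly faster than $\yh_2$. Writing $q(\gamma) \coloneqq \Prob_{(x,y)\sim D}[\ph_1(x) \ge \gamma]$ for the fraction of inputs exiting at the first branch, an input contributes time $T_1$ with probability $q(\gamma)$ and time $T_1 + T_{2|1}$ otherwise, so
\[
T_C(\gamma) = T_1 + \bigl(1 - q(\gamma)\bigr)\cdot T_{2|1}.
\]

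Next I would establish the monotonicity. The function $q(\gamma)$ is a right-tail of the distribution of $\ph_1(x)$ and is therefore non-increasing in $\gamma$; hence $T_C(\gamma)$ is non-decreasing in $\gamma$. Combined with the definition of $\gamma_1^*$ as the minimizer of (\ref{cons_conf:eq:fast:acc_constraint}), every $\gamma$ in the feasible set of that optimization satisfies $\gamma \ge \gamma_1^*$, so $T_C(\gamma) \ge T_C(\gamma_1^*)$. Since Theorem~\ref{thm:fast} shows that (\ref{cons_conf:eq:fast:acc_constraint}) is exactly the PAC sufficient condition used to certify $p_{\text{err}} \le \xi$, this completes the argument.

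The main obstacle is really definitional rather than technical: what counts as a ``cascading classifier that satisfies $p_{\text{err}} \le \xi$''. The natural reading, and the one that makes the theorem tight, is the collection of cascading classifiers whose threshold has been certified via (\ref{cons_conf:eq:fast:acc_constraint})---i.e.\ those to which Theorem~\ref{thm:fast} guarantees the error bound. Under that reading the proof is the short monotonicity argument above; if instead one compared against an oracle that knows the true error rates, the claim would fail because the Clopper--Pearson bound is strictly conservative, and a smaller $\gamma$ than $\gamma_1^*$ could in principle still satisfy the true constraint.
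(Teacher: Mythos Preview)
Your proposal is correct and follows essentially the same approach as the paper: both argue that inference time is non-decreasing in the threshold $\gamma_1$ (a larger threshold sends more inputs to the slow branch) and that $\gamma_1^*$ is the smallest feasible threshold, so any competing classifier with $\gamma_1' \ge \gamma_1^*$ is at least as slow. The paper phrases this as a brief proof by contradiction rather than via your explicit expected-time formula, and it silently adopts the same reading of the comparison class that you flag in your last paragraph.
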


%% !TEX root = ../pac_cal_iid.tex

%%
%% app
%%
\section{Application to Safe Planning}
\label{cons_conf:sec:safe}

Robots acting in open world environments must rely on deep learning for tasks such as object recognition---\eg detect objects in a camera image;
%or classify whether an obstacles in a LIDAR scan are bicycles or motorcycles.
providing guarantees on these predictions is critical for safe planning.
%Safe planning is critical for robotic systems. \textbf{TODO} We integrate the proposed approach with Model Predictive Shielding (MPS) framework \textbf{Shuo L, Osbert, Osbert} to calibrate a safety classifier and guarantee safety constraints for robotic systems. 
%\SP{briefly introduce MPS.}
Safety requires not just that the robot is safe while taking the action, but that it can safely come to a stop afterwards---\eg that a robot can safely come to a stop before running into a wall.
%In particular, given a trained/designed planner for a robot, we first design a backup policy that could bring the robot to a safe state.
We consider a binary classification DNN trained to predict a probability $\fh(x)\in[0,1]$ of whether the robot is unsafe in this sense.\footnote{Since $|\Ys|=2$, $\fh$ can be represented as a map $\fh:\Xs\to[0,1]$; the second component is simply $1-\fh(x)$.} 
If $\fh(x)\ge\gamma$ for some threshold $\gamma\in[0,1]$, then the robot comes to a stop (\eg to ask a human for help). If the label $\mathbbm{1}(\fh(x)\ge\gamma)$ correctly predicts safety, then this policy ensures safety as long as the robot starts from a safe state~\citep{li2020robust}.
%Given any input, this neural network will output a value ranging from 0 to 1.
%Of course, the DNN may make mistakes.
We apply our approach to choose $\gamma$ to ensure safety with high probability.
\textbf{Problem setup.}
Given a performant but potentially unsafe policy $\hat\pi$ (\eg a DNN policy trained to navigate to the goal), our goal is to override $\hat\pi$ as needed to ensure safety. We assume that $\hat\pi$ is trained in a simulator, and our goal is to ensure that $\hat\pi$ is safe according to our model of the environment, which is already a challenging problem when $\hat\pi$ is a deep neural network over perceptual inputs. In particular, we do not address the sim-to-real problem.

Let $x\in\Xs$ be the states, $\Xs_{\text{safe}}\subseteq\Xs$ be the safe states (\ie our goal is to ensure the robot stays in $\Xs_{\text{safe}}$), $o\in\Os$ be the observations, $u\in\Us$ be the actions, $g:\Xs\times\Us\to\Xs$ be the (deterministic) dynamics, and $h:\Xs\to\Os$ be the observation function.
%$x \in \Xs$ be a state, and
%$y \in \Ys$ be a safety label.
%Moreover, to represent a sequence of observations, we use $x_{1:T} \in \Xs^{T}$ for denoting $T$-consecutive observations (\ie roll-out), and $y_{1:T} \in \Ys^{T}$ for denoting $T$-consecutive safety labels (\ie labels of a roll-out).
%We assume given a backup policy $\pi_0$---\eg coming to a stop---along with an invariant subset of states $\Xs_{\text{inv}}\subseteq\Xs_{\text{safe}}$ where $\pi_0$ keeps the robot at rest---\ie $g(x,\pi_0(x))=x$ for all $x\in\Xs_{\text{inv}}$. Then,
A state $x$ is \emph{recoverable} (denoted $x\in\Xs_{\text{rec}}$) if the robot can use $\hat\pi$ in state $x$ and then safely come to a stop using a \emph{backup policy} $\pi_0$ (\eg braking).
%using $\pi_0$.
%---\ie letting $x_1=g(x,u)$ and $x_{t+1}=g(x_t,\pi_0(x_t))$ for $t\in\{1,...,N-1\}$, we have $x,x_1,...,x_N\in\Xs_{\text{safe}}$ and $x_N\in\Xs_{\text{inv}}$.

Then, the \emph{shield policy} uses $\hat\pi$ if $x\in\Xs_{\text{rec}}$, and $\pi_0$ otherwise~\citep{bastani2019safe}.
%\begin{align*}
%\pi_{\text{shield}}(x)=
%\begin{cases}
%\hat\pi(x)&\text{if}~x\in\Xs_{\text{rec}} \\
%\pi_0(x)&\text{otherwise}.
%\end{cases}
%\end{align*}
This policy guarantees safety as long as an initial state is recoverable---\ie $x_0\in\Xs_{\text{rec}}$.
%\begin{proposition}
%For any $x_0\in\Xs_{\text{inv}}$, we have $x_0,x_1,...\in\Xs_{\text{safe}}$ where $x_{t+1}=g(x_t,\pi_{\text{shield}}(x_t))$.
%\end{proposition}
The challenge is determining whether $x\in\Xs_{\text{rec}}$. When we observe $x$, we can use model-based simulation to perform this check. However, in many settings, we only have access to observations---\eg camera images or LIDAR scans---so this approach does not apply. Instead, we propose to train a DNN to predict recoverability---\ie
%\textbf{Safety classifier.}
%Predict whether the next state is safe or not, assuming the original policy is used.
\eqas{
\yh(o) \coloneqq 
\begin{cases}
1 ~(\text{``un-recoverable''})&\text{if}~\fh(o) \geq \gamma \\
0 ~(\text{``recoverable''})&\text{otherwise}
\end{cases}
\qquad\text{where}\qquad o=h(x),
}
with the goal that $\yh(o)\approx y^*(x)\coloneqq\mathbbm{1}(x\not\in\Xs_{\text{rec}})$, resulting in the following the shield policy $\pi_{\text{shield}}$:
\begin{align*}
\pi_{\text{shield}}(o)\coloneqq
\begin{cases}
\hat\pi(o)&\text{if}~\yh(o)=0 \\
\pi_0(o)&\text{otherwise}.
\end{cases}
\end{align*}

\textbf{Safety guarantee.}
Our main goal is to choose $\gamma$ so that $\pi_{\text{shield}}$ ensures safety with high probability---\ie given $\xi\in\mathbb{R}_{>0}$ and any distribution $D$ over initial states $\Xs_0\subseteq\Xs_{\text{rec}}$, we have
\eqa{
p_{\text{unsafe}}\coloneqq\Prob_{\zeta \sim D_{\pi_{\text{shield}}}}[ \zeta \not\subseteq\Xs_{\text{safe}}] \leq \xi, \label{cons_conf:eq:safety_constraint}
}
where 
$\zeta(x_0, \pi)\coloneqq(x_0,x_1,\dots)$ is a rollout from $x_0 \sim D$ generated using $\pi$---\ie $x_{t+1}=g(x_t,\pi(h(x_t)))$.\footnote{We can handle infinitely long rollouts, but in practice rollouts will be finite (but possibly arbitrarily long). and $D_{\pi_{\text{shield}}}$ is an induced distribution over rollouts $\zeta(x_0, \pi_{\text{shield}})$.} We assume the rollout terminates either once the robot reaches its goal, or once it switches to $\pi_0$ and comes to a stop; in particular, the robot never switches from $\pi_0$ back to $\hat\pi$.
%{\color{red}The key challenge is evaluating $p_{\text{unsafe}}$ without knowing $\gamma$; we will leverage rollouts from $\hat{\pi}$---\ie $\zeta \sim D_{\hat{\pi}}$.}

%where $D$ is a distribution over initial states $\Xs_0\subseteq\Xs_{\text{rec}}$, and
%$\zeta(x_0, \pi)\coloneqq(x_0,x_1,\dots) \sim D_{\hat{\pi}}$ is a rollout from $x_0 \sim D$ generated using $\pi$---\ie $x_{t+1}=g(x_t,\pi(h(x_t)))$\footnote{We can handle infinitely long rollouts, but in practice rollouts will be finite (but possibly arbitrarily long).}.
%We assume that the rollout terminates either once the robot reaches its goal, or once the robot switches to $\pi_0$ and comes to a stop; in particular, the robot never switches from $\pi_0$ back to $\hat\pi$.

%\eqa{
%\Prob_{D}\[ \yh_1=s \] \prod_{u=2}^{t} \Prob_{D} \[ \yh_t=s \mid \yh_{t-1}=s \]
%\prod_{v=1}^{t-1} \Prob_{D} \[ y_v=s \mid \yh_v=s \] (1 - \Prob_{D} \[ y_t=s \mid \yh_t=s \]) \leq \xi \label{cons_conf:eq:safety_constraint}
%}

%% success rate
\textbf{Success rate.}
To maximize the success rate (\ie the rate at which the robot achieves its goal), we need to minimize how often $\pi_{\text{shield}}$ switches to $\pi_0$, which corresponds to maximizing $\gamma$. 

%% calibration algorithm
\textbf{Algorithm.}
Our algorithm takes the confidence predictor $\fh$, desired bound $\xi\in\mathbb{R}_{>0}$ on the unsafety probability, confidence level $\delta\in[0,1]$, calibration set $W\subseteq\Xs^{\infty}$ of rollouts $\zeta\sim D_{\hat{\pi}}$, and calibration set $Z\subseteq\Os$ of samples from distribution $\tilde{D}$ described below; see Appendix \ref{sec:additionaldiscussion_safeplanning_data} for details on sampling $\zeta$ and constructing $W$, $Z$, and $\tilde{D}$. We want to maximize $\gamma$ subject to $p_{\text{unsafe}}\le\xi$, where $p_{\text{unsafe}}$ is implicitly a function of $\gamma$. However, we cannot evaluate $p_{\text{unsafe}}$, so we need an upper bound.

To this end, consider a rollout that the first unsafe state is encountered on step $t$ (\ie $x_t\not\in\Xs_{\text{safe}}$ but $x_i\in\Xs_{\text{safe}}$ for all $i<t$), which we call an \emph{unsafe rollout}, and denote the event that the unsafe rollout is encountered by $E_t$; we exploit the unsafe rollouts to bound $p_{\text{unsafe}}$. 
In particular, let $p_t\coloneqq\mathbb{P}_{\zeta \sim D_{\hat{\pi}}}[E_t]$, and let $\bar{p}\coloneqq\sum_{t=0}^{\infty}p_t$ be the probability that a rollout is unsafe.
Then, consider a new distribution $\tilde{D}$ over $\Os$ with a probability density function $p_{\tilde{D}}(o)\coloneqq\sum_{t=0}^{\infty}p_{D_{\hat{\pi}}}(o\mid E_t)\cdot p_t/\bar{p}$, where $p_{D_{\hat{\pi}}}$ is the original probability density function of $D_{\hat{\pi}}$; in particular, we can draw an observation $o\sim\tilde{D}$ by sampling the observation of the first unsafe state from a  rollout sample (and rejecting if the entire rollout is safe). Then, we can show the following (see a proof of Theorem~\ref{thm:safe} in Appendix~\ref{sec:thmsafeproof}):
%To that end, consider a rollout $\zeta(x_0,\hat\pi)$. Let $E_t$ be the event that the first unsafe state encountered is on step $t$ (\ie $x_t\not\in\Xs_{\text{safe}}$ but $x_i\in\Xs_{\text{safe}}$ for all $i<t$), let $p_t\coloneqq\mathbb{P}_{x_0\sim D}[E_t]$, and let $\bar{p}\coloneqq\sum_{t=0}^{\infty}p_t$ be the probability that the rollout is unsafe. Consider a new distribution $\tilde{D}$ over $\Os$ with probability density function $p_{\tilde{D}}(o)\coloneqq\sum_{t=0}^{\infty}p_D(o\mid E_t)\cdot p_t/\bar{p}$, where $p_D$ is the probability density function of $D$; in particular, we can draw a sample $o\sim\tilde{D}$ by sampling the first observation $o=h(x)$ in a rollout such that $y^*(x)=1$ (and rejecting if the entire rollout is safe). Then, we can show the following (see a proof of Theorem~\ref{thm:safe} in Appendix~\ref{sec:thmsafeproof}):
\begin{align}
p_{\text{unsafe}}\le \Prob_{o\sim\tilde{D}}[\yh(o)=0] \cdot \bar{p} \eqqcolon\bar{p}_{\text{unsafe}}.\label{cons_conf:eq:safe_bound}
\end{align}
We use our confidence coverage predictor $\Ch_0$ to compute bounds on $\bar{p}_{\text{unsafe}}$---\ie
\begin{align*}
\Prob_{o\sim\tilde{D}}\left[ \yh(o) = 1 \right] &\in [\text{\underbar{$c$}}, \bar{c}] \coloneqq \Ch_0\left(\fh, Z', \frac{\delta}{2} \right)
\qquad\text{where}\qquad
Z'\coloneqq\{(o,1)\mid o\in Z\} \\
\bar{p} &\in [\text{\underbar{$r$}}, \bar{r}] \coloneqq \hat{\Theta}_{\text{CP}}\left( W', \frac{\delta}{2} \right)
\qquad\text{where}\qquad
W'\coloneqq\{\mathbbm{1}(\zeta\not\subseteq\Xs_{\text{safe}})\mid\zeta\in W\}.
\end{align*}
Here, $Z'$ is a labeled version of $Z$, where ``$1$'' denotes ``un-recoverable'', $n \coloneqq |W|$, and $n' \coloneqq |Z|$, where $n \geq n'$.
Then, we have $\bar{p}_{\text{unsafe}}\le\bar{r}\cdot(1-\underline{c})$, so it suffices to solve the following problem:
\begin{align}
\gamma\coloneqq\operatorname*{\arg\max}_{\gamma'\in[0,1]}~\gamma'
\qquad\text{subj. to}\qquad
\bar{r}\cdot(1-\underline{c})\le\xi
\label{cons_conf:alg:safe}
\end{align}
Here, $\underline{c}$ is implicitly a function of $\gamma'$; thus, we use line search to solve this optimization problem. We have the following safety guarantee, see Appendix~\ref{sec:thmsafeproof} for a proof:
\begin{theorem}
\label{thm:safe}
We have $p_{\text{unsafe}}\le\xi$ with probability at least $1-\delta$ over $W\sim D_{\hat{\pi}}^n$ and $Z\sim\tilde{D}^{n'}$.
%the safety rate of the safety classifier $\yh$ with the proposed calibration is lower bounded by the desired safety rate, \ie
%\eqa{
%\Prob_{x_0\sim D} \[ \bigwedge_{t=0}^{\infty} y^*(x_t) = s \] \geq 1 - \xi.
%}
%with probability at least $1 - \alpha$.
\end{theorem}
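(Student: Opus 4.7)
My plan is to split the proof into three layers: a deterministic reduction establishing inequality~(\ref{cons_conf:eq:safe_bound}), two Clopper-Pearson calibration bounds from Theorem~\ref{thm:main} tied together by a union bound, and a final implication from the optimization constraint in~(\ref{cons_conf:alg:safe}).

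First I would prove the deterministic bound $p_{\text{unsafe}}\le \bar{p}_{\text{unsafe}}$. The key observation is that $\pi_{\text{shield}}$ can only switch from $\hat\pi$ to $\pi_0$ once and never back, and that once $\pi_0$ is engaged the rollout safely comes to a stop. Hence a shield rollout whose first unsafe time is $t$ must have used $\hat\pi$ at every step $i\le t$, which (i) makes the shield rollout coincide with the $\hat\pi$ rollout from the same $x_0$ through step $t$ and (ii) forces $\hat y(o_i)=0$ at every such step, in particular at $i=t$. Consequently,
\[
p_{\text{unsafe}} \le \sum_{t=0}^{\infty}\Pr_{\zeta\sim D_{\hat\pi}}\!\left[E_t \wedge \hat y(o_t)=0\right],
\]
and unpacking the density $p_{\tilde D}(o)=\sum_{t}p_{D_{\hat\pi}}(o\mid E_t)\,p_t/\bar p$ rewrites the right-hand side as $\Pr_{o\sim\tilde D}[\hat y(o)=0]\cdot\bar p=\bar{p}_{\text{unsafe}}$, as claimed.

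Next I would invoke Theorem~\ref{thm:main} in its single-bin special case $\hat C_0$ twice, each at confidence level $\delta/2$. Applied to $\hat f$ on the sample $Z'$ (drawn i.i.d.\ from $\tilde D$ with every label set to the ``un-recoverable'' class), it gives $\Pr_{o\sim\tilde D}[\hat y(o)=1]\in[\underline c,\bar c]$ with probability at least $1-\delta/2$ over $Z$. Applied to the independent draw of $W$ from $D_{\hat\pi}^n$ with indicator $\mathbbm{1}(\zeta\not\subseteq\Xs_{\text{safe}})$, Clopper-Pearson gives $\bar p\in[\underline r,\bar r]$ with probability at least $1-\delta/2$ over $W$. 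A union bound then yields that both intervals cover their targets simultaneously with probability at least $1-\delta$. On this joint event,
\[
\bar{p}_{\text{unsafe}} = \(1-\Pr_{o\sim\tilde D}[\hat y(o)=1]\)\cdot\bar p \le (1-\underline c)\cdot\bar r,
\]
and the constraint in~(\ref{cons_conf:alg:safe}) ensures $(1-\underline c)\cdot\bar r\le\xi$. Chaining with Step~1 yields $p_{\text{unsafe}}\le\xi$ on this event, which proves the theorem.

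The main obstacle is the deterministic reduction in Step~1: it leans crucially on the (implicit) assumption that the backup policy $\pi_0$ actually keeps the system inside $\Xs_{\text{safe}}$ from every state where the shield engages it---otherwise the one-way-transition argument fails and a shield-unsafe trajectory need not align with any $\hat\pi$-unsafe trajectory. The remaining pieces are routine: the two PAC intervals are direct instantiations of Theorem~\ref{thm:main}, the union bound is standard, and the monotone relationship between $\gamma'$ and $\underline c$ justifies using line search to realize the constraint in~(\ref{cons_conf:alg:safe}).
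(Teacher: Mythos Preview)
Your three-layer strategy mirrors the paper's proof: a deterministic coupling reduction to establish inequality~(\ref{cons_conf:eq:safe_bound}), two Clopper--Pearson intervals at level $\delta/2$ combined by a union bound, and then the optimization constraint from~(\ref{cons_conf:alg:safe}). Your Step~2 is in fact more explicit than the paper, which compresses the two interval guarantees and the union bound into a single concluding sentence.

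There is, however, a genuine gap in your Step~1. You argue that a shield rollout whose \emph{first unsafe time} is $t$ must have $\hat y(o_i)=0$ for every $i\le t$, ``in particular at $i=t$.'' But the action taken at step $t$ has no bearing on whether $x_t$ itself is safe: $x_t$ is determined by $u_0,\dots,u_{t-1}$. So even if the shield switches to $\pi_0$ at step $t$ (i.e., $\hat y(o_t)=1$), the state $x_t$ is already unsafe and the rollout remains unsafe. Your argument therefore only forces $\hat y(o_i)=0$ for $i\le t-1$, which does not deliver the factor $\Pr_{o\sim\tilde D}[\hat y(o)=0]$ needed in~(\ref{cons_conf:eq:safe_bound}). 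The paper closes this gap by indexing $E_t$ not by the first \emph{unsafe} step but by the first \emph{un-recoverable} step, redefining $E_t\coloneqq\bigwedge_{i<t}\, y^*(x_i)=r\wedge y^*(x_t)=u$. Under this framing, if the shield were to engage $\pi_0$ at step $t$, then since $x_{t-1}\in\Xs_{\text{rec}}$, the very definition of recoverability guarantees that applying $\hat\pi$ at $x_{t-1}$ followed by $\pi_0$ from $x_t$ onward keeps the entire rollout in $\Xs_{\text{safe}}$---contradicting unsafety. Hence $\hat y(o_t)=0$ is genuinely forced, and the bound follows. Your identified ``implicit assumption'' about $\pi_0$ is in the right spirit, but the precise structural property that makes the step-$t$ conclusion work is the definition of $\Xs_{\text{rec}}$ (one step of $\hat\pi$ then $\pi_0$ is safe), not a blanket safety guarantee on $\pi_0$ from arbitrary engagement states.
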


\section{Experiments}
\label{sec:exp}

%\SP{how to empirically evaluate the quality of the estimated confidence interval? Thm1 theoretically justifies the quality, but reviewers may want to see the empirical results as well.}

We demonstrate that how our proposed approach can be used to obtain provable guarantees in our two applications: fast DNN inference and safe planning. Additional results are in Appendix \ref{sec:additionalexperiments}.

\subsection{Calibration}

%% calibration comparison results. n=20000
\begin{figure}
\vspace{-2ex}
\centering
	\begin{subfigure}{0.32\textwidth}
	\centering
	\includegraphics[width=\linewidth]{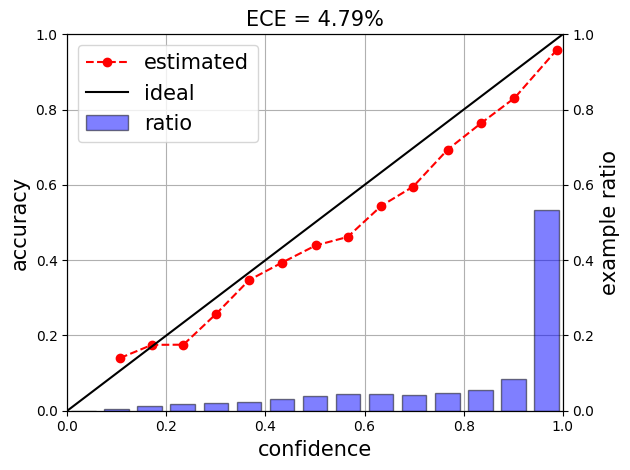}
	\caption{Na\"{\i}ve softmax}
	\label{fig:cal_comp_summary:naive}
	\end{subfigure}
	\begin{subfigure}{0.32\textwidth}
	\centering
  	\includegraphics[width=\linewidth]{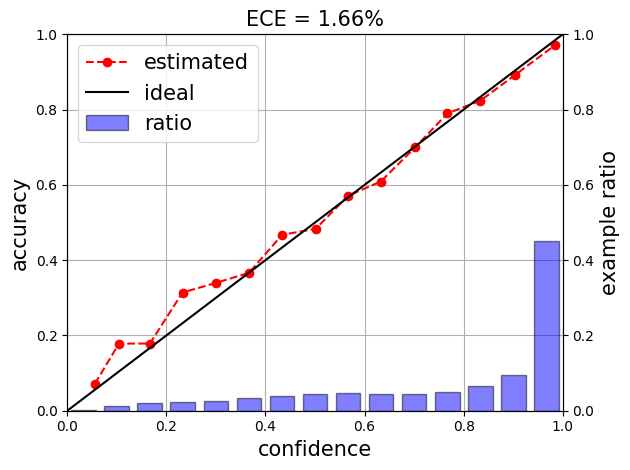}
	\caption{Temperature scaling}
	\label{fig:cal_comp_summary:temp}
	\end{subfigure}
	\begin{subfigure}{0.32\textwidth}
	\centering
	\includegraphics[width=\linewidth]{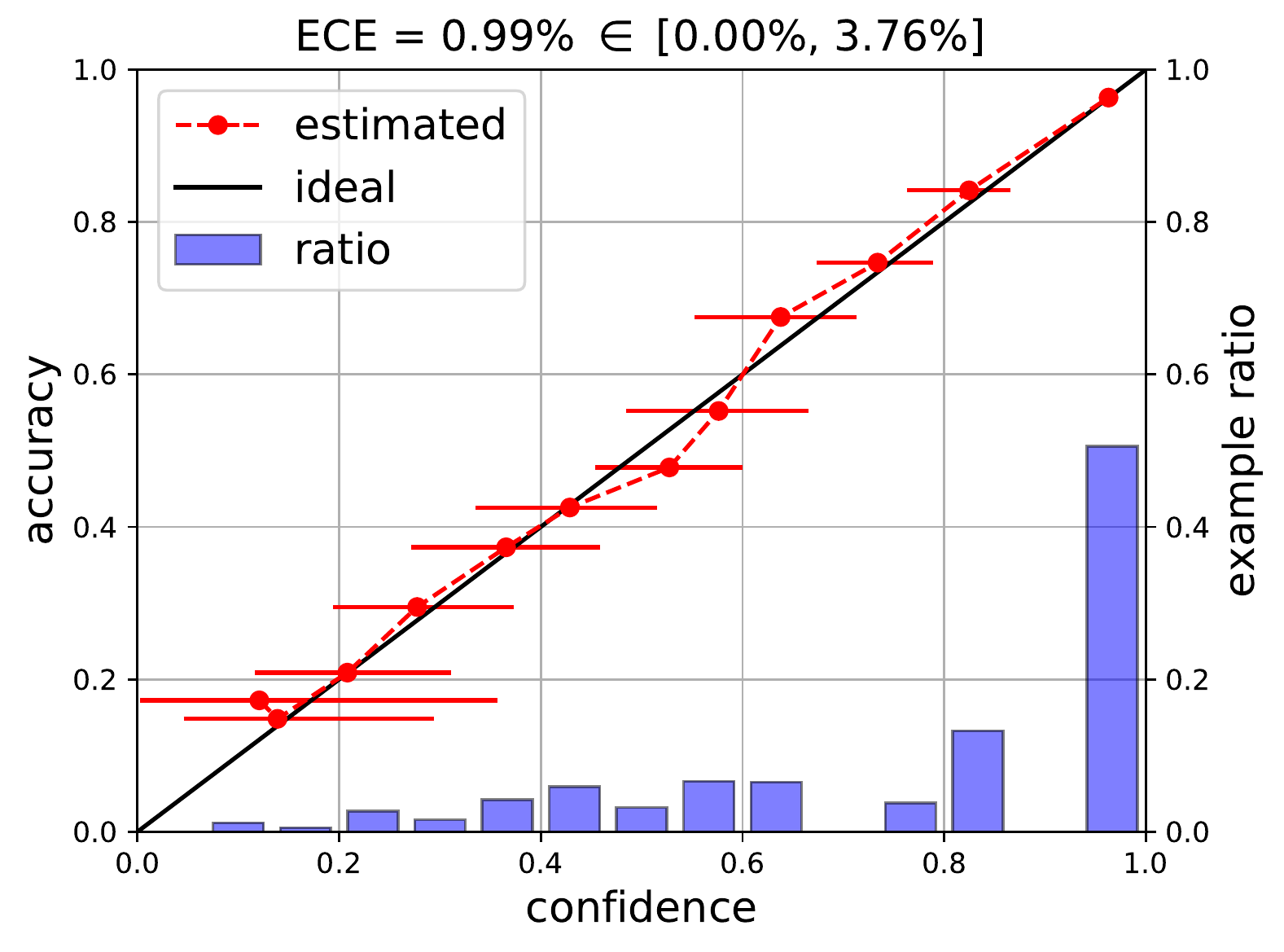}
	\caption{Proposed}
	\label{fig:cal_comp_summary:prop}
	\end{subfigure}
\vspace{-2ex}
\caption{Calibration comparison; default parameters of $\Ch$ are $K=20$, $n=20,000$, and $\delta=10^{-2}$.}
\label{fig:cal_comp_summary}
\end{figure}

%The proposed confidence coverage predictor solves the PAC calibration problem, where directly comparing to other calibration approaches is not trivial.
We illustrate the calibration properties of our approach using reliability diagrams, which show the empirical accuracy of each bin as a function of the predicted confidence~\citep{guo2017calibration}. Ideally, the accuracy should equal the predicted confidence, so the ideal curve is the line $y=x$. To draw our predicted confidence intervals in these plots, we need to rescale them; see Appendix \ref{sec:additionaldiscussion_transform}.

\textbf{Setup.}
We use the ImageNet dataset \citep{russakovsky2015imagenet} and ResNet101 \citep{he2016deep} for evaluation.
% dataset
We split the ImageNet validation set into $20,000$ calibration and $10,000$ test images.

\textbf{Baselines.} We consider three baselines: (i) na\"{\i}ve softmax of $\fh$, (ii) temperature scaling \citep{guo2017calibration}, and (iii) histogram binning \citep{zadrozny2001obtaining}; see Appendix \ref{sec:additionaldiscussion_baselines} for details. For histogram binning and our approach, we use  $K=20$ bins of the same size. 

\textbf{Metric.} We use expected calibration error (ECE) and reliability diagrams (see Appendix~\ref{sec:additionaldiscussion_transform}).

\textbf{Results.} Results are shown in Figure \ref{fig:cal_comp_summary}. The ECE of the na\"{\i}ve softmax baseline is $4.79\%$ (Figure \ref{fig:cal_comp_summary:naive}), of temperature scaling enhances this to $1.66\%$ (Figure \ref{fig:cal_comp_summary:temp}), and of histogram binning is $0.99\%$ (Figure \ref{fig:cal_comp_summary:prop}). Our approach predicts an interval that include the empirical accuracy in all bins
%is included, thus the induced interval by the ECE or reliability diagram equations is represented in
(solid red lines in Figure \ref{fig:cal_comp_summary:prop}); furthermore, the upper/lower bounds of the ECE over values in our bins is $[0.0\%, 3.76\%]$, which includes zero ECE.
%Thus, our approach includes the true confidence with high probability.
See Appendix \ref{cons_conf:sec:cal_additional} for additional results.

\subsection{Fast DNN Inference}

%% setup
\textbf{Setup.} 
%We use the ImageNet dataset \citep{russakovsky2015imagenet} and ResNet101 \citep{he2016deep} for evaluation.
%%our proposed algorithm for choosing exit conditions $\gamma_{1:M-1}$.
%% dataset
%We split the ImageNet validation set into $20,000$ calibration images and $10,000$ test images.
We use the same ImageNet setup along with ResNet101 as the calibration task. 
% branch network
For the cascading classifier, we use the original ResNet101 as the slow network, and add a single exit branch (\ie $M=2$) at a quarter of the way from the input layer.
% train
We train the newly added branch using the standard procedure for training ResNet101.
%\SP{Osbert: Do you train the whole network end-to-end, or just train the new layer? I only train the new layer. \ie I do not change any weights which are originated from resnet101.}
%% results
\begin{figure}
\centering
\vspace{-2ex}
%% comparison
\begin{subfigure}{0.33\textwidth}
\centering
\includegraphics[width=\textwidth]{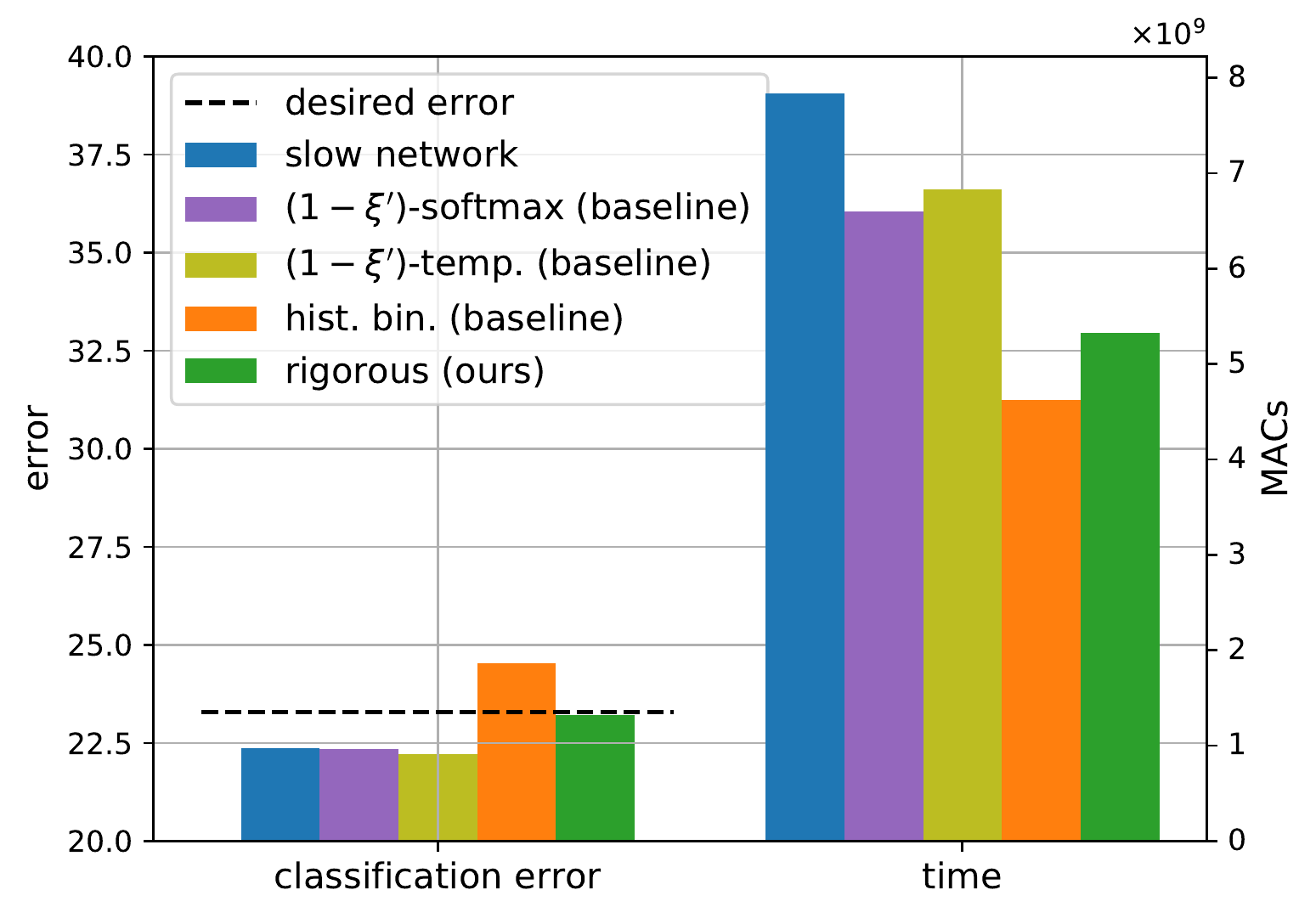}
\caption{Comparison to baselines.}
\label{cons_conf:fig:fast_comp}
\end{subfigure}
%% ablation summary
\begin{subfigure}{0.30\textwidth}
\centering
\includegraphics[width=\textwidth]{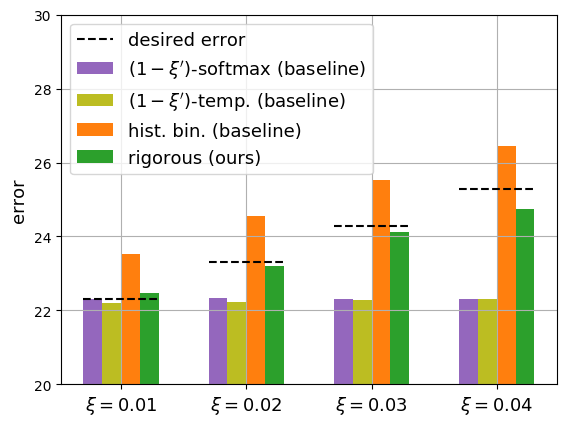}
\caption{Varying desired error rate $\xi$.}
\label{cons_conf:fig:fast:abl_guarantee}
\end{subfigure}
%\vspace{1ex}
\begin{subfigure}{\textwidth}
\centering
\scriptsize
\begin{tabular}{ccccc}
\toprule
method & top-$1$ error (\%) & CPU (\si{\micro\second}) & GPU (\si{\micro\second}) & MACs
\\ \midrule
slow network & 22.32 & 214.31 & 2.95 & $7.83\times10^{9}$  %135.7, 2.04
\\
\makecell{$(1-\xi')$-softmax (baseline)} & 22.34 & 104.24 & 1.78 & $6.60 \times 10^{9}$
\\
\makecell{$(1-\xi')$-temperature scaling (baseline)} & 22.22 & 109.57 & 1.81 & $6.83\times 10^{9}$
\\
\makecell{histogram binning (baseline)} & 24.54 & 70.72 & 1.19 & $4.62\times 10^{9}$ %56.0, 0.85
\\
\makecell{rigorous (ours)} & 23.26 & 85.58 & 1.34 & $5.33\times 10^{9}$ % 63.3, 0.97
\\\bottomrule
\end{tabular}
\caption{Running time comparison.}
\label{cons_conf_fig:fast:time_analysis}
\end{subfigure}
\vspace{-1ex}
\caption{Fast DNN inference results; default parameters are $n=20,000$, $\xi=0.02$, and $\delta=10^{-3}$.}
\vspace{-2ex}
\end{figure}

%% method
\textbf{Baselines.}
We compare to na\"{\i}ve softmax and to calibrated prediction via temperature scaling, both using a threshold $\gamma_1 = 1 - \xi^{'}$, where $\xi^{'}$ is the sum of $\xi$ and the validation error of the slow model; intuitively, this threshold is the one we would use if the predicted probabilities are perfectly calibrated. We also compare to histogram binning---\ie our approach but using the means of each bin instead of the upper/lower bounds. %(\ref{cons_conf:eq:fast:acc_constraint}), which is equivalent to use histogram binning in calibration \citep{zadrozny2001obtaining}.
%The composed model where the exit conditions are rigorously chosen by the proposed algorithm
See Appendix \ref{sec:additionaldiscussion_baselines} for details.

%% metric
\textbf{Metrics.}
% desired error bound
First, we measure test set top-1 classification error (\ie $1-\text{accuracy}$), which we want to guarantee this lower than a desired error (\ie the error of the slow model and desired relative error $\xi$). %To estimate the desired error, we use the Clopper-Pearson to estimate the interval of the original model's error.
% time
To measure inference time, we consider the average number of multiplication-accumulation operations (MACs) used in inference per example. Note that the MACs are averaged over all examples in the test set since the combined model may use different numbers of MACs for different examples.

%% comparison
\textbf{Results.}
The comparison results with the baselines are shown in Figure \ref{cons_conf:fig:fast_comp}.
% describe the plot
The original neural network model is denoted by ``slow network'', our approach (\ref{cons_conf:eq:fast:acc_constraint}) by ``rigorous'', and our baseline by ``$(1-\xi^{'})$-softmax'', ``$(1-\xi^{'})$-temp.'', and ``hist. bin.''. For each method, we plot the classification error and time in MACs. The desired error upper bound is plotted as a dotted line; the goal is for the classification error to be lower than this line.
% analysis
As can be seen, our method is guaranteed to achieve the desired error, while improving the inference time by $32\%$ compared to the slow model. On the other hand, the histogram baseline improves the inference time but fails to satisfy the desired error. Asymptotically, histogram binning is guaranteed to be perfectly calibrated, but it makes mistakes due to finite sample errors. The other baselines do not improve inference time.
%% ablation summary
Next, Figure \ref{cons_conf:fig:fast:abl_guarantee} shows the classification error as we vary the desired relative error $\xi$; our approach always achieves the desired error on the test set, and is often very close (which maximizes speed). However, the baselines often violate the desired error bound.
%% time analysis
Finally, the MACs metric is only an approximation of the actual inference time.
%based on neural network operations, so it could approximate actual inference time---\ie indexing operations for forwarding examples to each branch in bach setup.
To complement MACs, we also measure CPU and GPU time using the PyTorch profiler. In Figure \ref{cons_conf_fig:fast:time_analysis}, we show the inference times for each method, where trends are as before; our approach improves running time by $54\%$, while only reducing classification error by $1\%$. The histogram baseline is faster than our approach, but does not satisfy the error guarantee. These results include the time needed to compute the intervals, which is negligible.

\subsection{Safe Planning}

%% results
\begin{figure}
\centering
\vspace{-2ex}
%% comparison
\begin{subfigure}{0.33\textwidth}
\centering
\includegraphics[width=\textwidth]{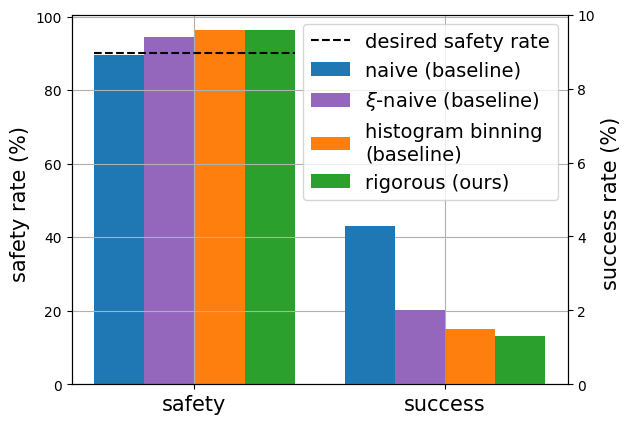}
\caption{Comparison to baselines.}
\label{cons_conf:fig:safe_comp}
\end{subfigure}
%% ablation summary
\begin{subfigure}{0.30\textwidth}
\centering
\includegraphics[width=\textwidth]{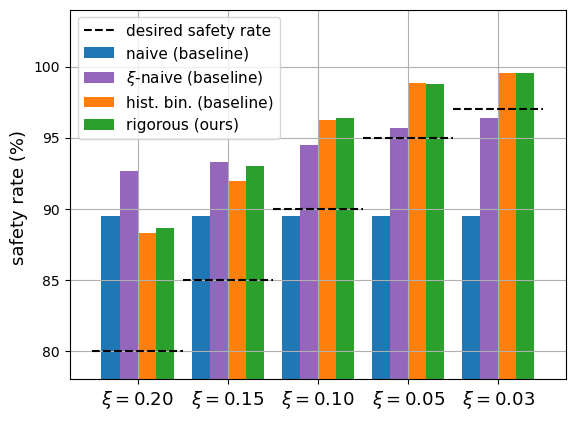}
\caption{Varying desired safety rate $\xi$.}
\label{cons_conf:fig:safe:abl_guarantee}
\end{subfigure}
\vspace{-1ex}
\caption{Safe planning results; default parameters are: $n=20,000$, $\xi=0.1$, $\delta=10^{-2}$.}
\vspace{-3ex}
\end{figure}

%% setup
\textbf{Setup.}
We evaluate on AI Habitat \citep{habitat19iccv}, an indoor robot simulator that provides agents with observations $o=h(x)$ that are RGB camera images. The safety constraint is to avoid colliding with obstacles such as furniture in the environment. We use the learned policy $\hat\pi$ available with the environment. Then, we train a recoverability predictor,
%We use this to generate rollouts with different initial and goal states. In particular,
trained using 500 rollouts with a horizon of 100.
%to collect image, depth and action data along with collision events, which serves as ``safe'' or ``unsafe'' labels. This data is used to train a classifier to predict recoverability.
We calibrate this model on an additional $n$ rollouts. %Finally, we evaluate the MPS along with the recoverability classifier by simulation.  

%% method
\textbf{Baselines.}
We compare to three baselines: (i) histogram binning---\ie our approach but using the means of each bin rather than upper/lower bounds, (ii) a na\"{\i}ve approach of choosing $\gamma=0.5$, and (iii) a na\"{\i}ve but adaptive approach of choosing $\gamma=\xi$, called ``$\xi$-na\"{\i}ve''.
%We call this classifier with $\gamma=0.5$ ``naive''. 
%The proposed algorithm (\ref{cons_conf:alg:safe}) uses the rigorous confidence interval. 

%% metric
\textbf{Metrics.}
We measure both the safety rate and the success rate; in particular, a rollout is successful if the robot reaches its goal state, and a rollout is safe if it does not reach any unsafe states.

%% comparison
\textbf{Results.}
We show results in Figure \ref{cons_conf:fig:safe_comp}.
%We draw safety and success rates for each method.
The desire safety rate $\xi$ is shown by the dotted line---\ie we expect the safety rate to be above this line. As can be seen, our approach achieves the desired safety rate. While it sacrifices success rate, this is because the underlying learned policy $\hat\pi$ is frequently unsafe; in particular, it is only safe in about 30\% of rollouts. The na\"{\i}ve approach fails to satisfy the safety constraint. The $\xi$-na\"{\i}ve approach tends to be optimistic, and also fails when $\xi=0.03$ (Figure \ref{cons_conf:fig:safe:abl_guarantee}). The histogram baseline performs similarly to our approach. The main benefit of our approach is providing the absolute guarantee on safety, which the histogram baseline does not provide. Thus, in this case, our approach can provide this guarantee while achieving similar performance.
%% ablation summary
Figure \ref{cons_conf:fig:safe:abl_guarantee} shows the safety rate as we vary the desired safety rate $\xi$. Both our approach and the baseline satisfy the desired safety guarantee, whereas the naive approaches do not always do so.

%% !TEX root = ../cons_conf.tex

%%
%%
%%
\section{Conclusion}

We have proposed a novel algorithm for calibrated prediction that provides PAC guarantees, and demonstrated how our approach can be applied to fast DNN inference and safe planning. There are many directions for future work---\eg leveraging these techniques in more application domains and extending our approach to settings with distribution shift (see Appendix \ref{sec:additionaldiscussion_limitations} for a discussion).

\subsubsection*{Acknowledgments}
{\small
This work was supported in part by AFRL/DARPA FA8750-18-C-0090, ARO W911NF-20-1-0080, DARPA FA8750-19-2-0201, and NSF CCF 1910769. Any opinions, findings and conclusions or recommendations expressed in this material are those of the authors and do not necessarily reflect the views of the Air Force Research Laboratory (AFRL), the Army Research Office (ARO), the Defense Advanced Research Projects Agency (DARPA), or the Department of Defense, or the United States Government.
}

\bibliography{externals/mybibtex/sml,extra}
\bibliographystyle{iclr2021_conference}

\clearpage
\appendix
%% !TEX root = ../pac_cal_iid.tex

%%
%% Appendix
%%

%%
%% standard pac
%%
\section{Connection to PAC Learning Theory}
\label{sec:pac_dis}

%% version for previous PAC with x \in \Xs
We explain the connection to PAC learning theory. First, note that we can represent $\Ch$ as a confidence interval around the empirical estimate of $c_{\fh}(x)$---\ie $\ch_{\fh}(x) \coloneqq  \sum_{(x', y') \in S_x} \mathbbm{1}( y' = \yh(x')) /|S_x|$, where $S_x = \{ (x', y') \mid \ph(x') \in B_{\kappa_{\fh}(x)} \}$. Then, we can write
\begin{align*}
\Ch(x) = [\ch_{\fh}(x) - \underline{\epsilon}_{x}, \ch_{\fh}(x) + \bar{\epsilon}_{x} ].
\end{align*}
In this case, (\ref{eq:cons_conf_goal}) is equivalent to
\begin{align}
\label{cons_conf:eq:pac_def_equivalent}
\Prob_{Z \sim D^{n}} \left[ \bigwedge_{x \in \Xs} \ch_{\fh}(x) - \underline{\epsilon}_{\kappa_{\fh}(x)} \le  c_{\fh}(x) \le \hat{c}_{\fh}(x) + \bar{\epsilon}_{\kappa_{\fh}(x)} \right] \geq 1 - \delta, %\qquad (\forall x\in\mathcal{X}),
\end{align}
for some $\underline{\epsilon}_1,\bar{\epsilon}_1,\dots,\underline{\epsilon}_K,\bar{\epsilon}_K$. In this bound, ``approximately'' refers to the fact that the empirical estimate $\ch_{\fh}(x)$ is within $\epsilon$ of the true value $c_{\fh}(x)$, and ``probably'' refers to the fact that this error bound holds with high probability over the training data $Z\sim D^n$. By abuse of terminology, we refer to the confidence interval predictor $\hat{C}$ as PAC rather than just $\hat{c}_{\hat{f}}(x)$.

%We explain the connection to PAC learning theory. First, note that we can represent $\Ch$ as a confidence interval around the empirical estimate of $c_{\fh}(x)$---\ie $\ch_{\fh}(x) \coloneqq  \sum_{(x', y') \in S_x} \mathbbm{1}( y' = \yh(x')) /|S_x|$, where $S_x = \{ (x', y') \mid \ph(x') \in B_{\kappa_{\fh}(x)} \}$. Then, we can write $\Ch(x) = [\ch_{\fh}(x) - \underline{\epsilon}_{x}, \ch_{\fh}(x) + \bar{\epsilon}_{x} ]$. In this case, (\ref{eq:cons_conf_goal}) is equivalent to
%{\color{red}
%\begin{align}
%\label{cons_conf:eq:pac_def_equivalent}
%\Prob_{Z \sim D^{n}} \left[ \bigwedge_{k=1}^K \ch_{\fh}(x_k) - \underline{\epsilon}_{k} \le  c_{\fh}(x_k) \le \hat{c}_{\fh}(x_k) + \overline{\epsilon}_{k} \right] \geq 1 - \delta, 
%\end{align}
%}
%for some $\underline{\epsilon}_1,\bar{\epsilon}_1,\dots,\underline{\epsilon}_K,\bar{\epsilon}_K$, and $x_k \in \kappa^{-1}(k)$. This bound is similar to concentration inequalities such as Hoeffding's inequality, where ``approximately'' refers to the bound on the error of our empirical estimate $\ch_{\fh}(x)$ compared to the true value $c_{\fh}(x)$, and ``probably'' refers to the fact that this error bound holds with high probability over the training data $Z\sim D^n$. 

Alternatively, we also have the following connection to PAC learning theory:
%We show that our confidence coverage predictor $\Ch$ is correct in the sense of PAC learning theory.
\begin{definition}
\label{cons:def:another_pac}
\rm
Given $\epsilon, \delta\in\mathbb{R}_{>0}$ and $n \in \naturalnum$, $\Ch$ is \emph{probably approximately correct (PAC)} if, for any distribution $D$, we have
\eqa{
%\label{eq:cons_conf_goal}
\Prob_{Z\sim D^n}\left[ \Prob_{x\sim D} \[ c_{\fh}(x) \in \Ch(x;\fh,Z) \] \geq 1 - \epsilon \right]\ge1-\delta.
}
\end{definition}
The following theorem shows that the proposed confidence coverage predictor $\Ch$ satisfies the PAC guarantee in Definition \ref{cons:def:another_pac}.

\clearpage
\begin{theorem}
\label{cons_conf:thm:pac_more_formal}
Our confidence coverage predictor $\Ch$ satisfies Definition \ref{cons:def:another_pac} for all $\epsilon, \delta \in \realnum_{>0}$, and $n \in \naturalnum$.
\end{theorem}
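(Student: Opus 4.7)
The plan is to derive Theorem~\ref{cons_conf:thm:pac_more_formal} as an immediate consequence of Theorem~\ref{thm:main}, since the latter provides a strictly stronger guarantee. Recall that Theorem~\ref{thm:main} asserts
\[
\Prob_{Z\sim D^n}\!\left[\bigwedge_{x\in\Xs} c_{\fh}(x)\in\Ch(x;\fh,Z)\right]\ge 1-\delta.
\]
The goal is to move from this universal-quantification statement to the average-case formulation in Definition~\ref{cons:def:another_pac}.

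First I would define the (random) event
\[
\mathcal{E}(Z)\;\coloneqq\;\bigwedge_{x\in\Xs} c_{\fh}(x)\in\Ch(x;\fh,Z),
\]
so that Theorem~\ref{thm:main} reads $\Prob_{Z\sim D^n}[\mathcal{E}(Z)]\ge 1-\delta$. Next I would observe that on the event $\mathcal{E}(Z)$, the indicator $\mathbbm{1}(c_{\fh}(x)\in\Ch(x;\fh,Z))$ is identically $1$ for every $x\in\Xs$, so in particular
\[
\Prob_{x\sim D}\bigl[\,c_{\fh}(x)\in\Ch(x;\fh,Z)\,\bigr]\;=\;1\;\ge\;1-\epsilon
\]
for any $\epsilon\in\realnum_{>0}$. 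Therefore $\mathcal{E}(Z)$ is contained in the event
\[
\mathcal{E}'(Z)\;\coloneqq\;\Bigl\{\,\Prob_{x\sim D}[c_{\fh}(x)\in\Ch(x;\fh,Z)]\ge 1-\epsilon\,\Bigr\},
\]
and monotonicity of probability gives $\Prob_{Z\sim D^n}[\mathcal{E}'(Z)]\ge\Prob_{Z\sim D^n}[\mathcal{E}(Z)]\ge 1-\delta$, which is exactly the guarantee required by Definition~\ref{cons:def:another_pac}.

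There is essentially no obstacle in this argument; all of the real work was already done in Theorem~\ref{thm:main}, whose proof relies on the Clopper--Pearson coverage guarantee applied to each of the $K$ bins together with a union bound (since the membership $c_{\fh}(x)\in\Ch(x;\fh,Z)$ depends on $x$ only through its bin index $\kappa_{\fh}(x)\in\{1,\dots,K\}$, so the conjunction over $\Xs$ reduces to a conjunction over the $K$ bins, each handled at level $\delta/K$). The only subtlety worth flagging is that the stated implication holds uniformly in $\epsilon$: because the inner probability in $\mathcal{E}(Z)$ is exactly $1$, the same draw $Z$ witnesses the $(\epsilon,\delta)$-PAC guarantee for every $\epsilon>0$ simultaneously, which matches the ``for all $\epsilon,\delta$'' quantifier in the theorem statement.
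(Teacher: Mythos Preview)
Your proof is correct. It is also strictly simpler and more modular than the paper's own argument, so it is worth noting the difference.

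The paper does not invoke Theorem~\ref{thm:main} as a black box. Instead, it gives a self-contained derivation: it expands the inner probability $\Prob_{x\sim D}[c_{\fh}(x)\in\Ch(x;\fh,Z)]$ by partitioning over bins, rewrites each summand using the fact that within bin $k$ the event $c_{\fh}(x)\in\Ch(x)$ is the deterministic event $\theta_k\in\hat{\Theta}_{\text{CP}}(W_k;\delta/K)$, then lower-bounds the outer probability by intersecting with the event $\bigwedge_{k=1}^K\{\theta_k\in\hat{\Theta}_{\text{CP}}(W_k;\delta/K)\}$. On that intersection the inner sum collapses to $\sum_k\Prob[\ph(x)\in B_k]=1\ge1-\epsilon$, and the probability of the intersection is $\ge1-\delta$ by Clopper--Pearson plus a union bound. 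In other words, the paper essentially re-derives Theorem~\ref{thm:main} inside the proof of Theorem~\ref{cons_conf:thm:pac_more_formal}, and only at the very last step performs the same ``inner probability equals $1$'' observation that you start from.

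Your route is preferable from a presentation standpoint: once Theorem~\ref{thm:main} is established, Definition~\ref{cons:def:another_pac} follows in two lines by monotonicity, exactly as you wrote. The paper's longer chain of equalities buys nothing additional; it just unpacks the bin structure explicitly rather than relying on the already-proved stronger statement. Your remark that the same $Z$ witnesses the guarantee for every $\epsilon>0$ simultaneously is also a nice observation that the paper leaves implicit.
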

\begin{proof}
We exploit the independence of each bin for the proof. 
Let $\theta_{\kappa_{\fh}(x)} \coloneqq c_{\fh}(x)$, which is the parameter of the Binomial distribution of the $\kappa_{\fh}(x)$th bin, the following holds:
\eqas{
& \Probop_{Z \sim D^{n}} \[ \Probop_{x \sim D}\[ c_{\fh}(x) \in \Ch(x; \fh, Z, \delta) \] \geq 1 - \epsilon \] \\
	&= \Probop_{Z \sim D^{n}} \[ \Probop_{x \sim D}\[ c_{\fh}(x) \in \Ch(x; \fh, Z, \delta) \wedge \bigvee_{k=1}^{K} \ph(x) \in B_k \] \geq 1 - \epsilon \] \\
	&= \Probop_{Z \sim D^{n}} \[ \sum_{k=1}^{K} \Probop_{x \sim D}\[ c_{\fh}(x) \in \Ch(x; \fh, Z, \delta) \wedge \ph(x) \in B_k \] \geq 1 - \epsilon \] \\
	&= \Probop_{Z \sim D^{n}} \[ \sum_{k=1}^{K} \Probop_{x \sim D}\[ c_{\fh}(x) \in \Ch(x; \fh, Z, \delta) \;\middle|\; \ph(x) \in B_k \] \Probop_{x \sim D}\[ \ph(x) \in B_k \] \geq 1 - \epsilon \] \\
	&= \Probop_{Z \sim D^{n}} \[ \sum_{k=1}^{K} \Probop_{x \sim D}\[ \theta_k \in \hat{\Theta}_{\text{CP}}\(W_{k}; \frac{\delta}{K}\) \;\middle|\; \ph(x) \in B_k \] \Probop_{x \sim D}\[ \ph(x) \in B_k \] \geq 1 - \epsilon \] \\
	&= \Probop_{Z \sim D^{n}} \[ \sum_{k=1}^{K} \mathbbm{1}\[ \theta_k \in \hat{\Theta}_{\text{CP}}\(W_{k}; \frac{\delta}{K}\) \] \Probop_{x \sim D}\[ \ph(x) \in B_k \] \geq 1 - \epsilon \] \\
	&\geq \Probop_{Z \sim D^{n}} \Bigg[ \sum_{k=1}^{K} \mathbbm{1}\[ \theta_k \in \hat{\Theta}_{\text{CP}}\(W_{k}; \frac{\delta}{K}\) \] \Probop_{x \sim D}\[ \ph(x) \in B_k \] \geq 1 - \epsilon \\ &\hspace{40ex} \wedge \bigwedge_{k=1}^{K} \mathbbm{1}\[ \theta_k \in \hat{\Theta}_{\text{CP}}\(W_{k}; \frac{\delta}{K}\) \] = 1 \Bigg] \\
	&= \Probop_{Z \sim D^{n}} \[ \sum_{k=1}^{K} \mathbbm{1}\[ \theta_k \in \hat{\Theta}_{\text{CP}}\(W_{k}; \frac{\delta}{K}\) \] \Probop_{x \sim D}\[ \ph(x) \in B_k \] \geq 1 - \epsilon \;\middle|\; \bigwedge_{k=1}^{K} \mathbbm{1}\[ \theta_k \in \hat{\Theta}_{\text{CP}}\(W_{k}; \frac{\delta}{K}\) \] = 1 \] \\ & \hspace{40ex} \Probop_{Z \sim D^{n}}\[ \bigwedge_{k=1}^{K} \mathbbm{1}\[ \theta_k \in \hat{\Theta}_{\text{CP}}\(W_{k}; \frac{\delta}{K}\) \] = 1 \] \\
	&= \Probop_{Z \sim D^{n}} \[ \sum_{k=1}^{K} \Probop_{x \sim D}\[ \ph(x) \in B_k \] \geq 1 - \epsilon \;\middle|\; \bigwedge_{k=1}^{K} \mathbbm{1}\[ \theta_k \in \hat{\Theta}_{\text{CP}}\(W_{k}; \frac{\delta}{K}\) \] = 1 \] \\ &\hspace{40ex} \Probop_{Z \sim D^{n}}\[ \bigwedge_{k=1}^{K} \mathbbm{1}\[ \theta_k \in \hat{\Theta}_{\text{CP}}\(W_{k}; \frac{\delta}{K}\) \] = 1 \] \\
	&= \Probop_{Z \sim D^{n}} \[ 1 \geq 1 - \epsilon \;\middle|\; \bigwedge_{k=1}^{K} \mathbbm{1}\[ \theta_k \in \hat{\Theta}_{\text{CP}}\(W_{k}; \frac{\delta}{K}\) \] = 1 \] \Probop_{Z \sim D^{n}}\[ \bigwedge_{k=1}^{K} \mathbbm{1}\[ \theta_k \in \hat{\Theta}_{\text{CP}}\(W_{k}; \frac{\delta}{K}\) \] = 1 \] \\
	&= \Probop_{Z \sim D^{n}}\[ \bigwedge_{k=1}^{K} \mathbbm{1}\[ \theta_k \in \hat{\Theta}_{\text{CP}}\(W_{k}; \frac{\delta}{K}\) \] = 1 \] \\
	&= \Probop_{Z \sim D^{n}}\[ \bigwedge_{k=1}^{K} \theta_k \in \hat{\Theta}_{\text{CP}}\(W_{k}; \frac{\delta}{K}\) \] \\
	&\geq 1 - \delta,
}
where the last inequality holds by the union bound.
\end{proof}

\section{Proof of Theorem~\ref{thm:main}}
\label{sec:thmmainproof}

We prove this by exploiting the independence of each bin. 
Recall that $\Ch(x) \coloneqq [\ch_{\fh}(x) - \underline{\epsilon}_{x}, \ch_{\fh}(x) + \bar{\epsilon}_{x} ]$, and the interval is obtained by applying the Clopper-Pearson interval with confidence level $\frac{\delta}{K}$ at each bin.
Then, the following holds due the Clopper-Pearson interval for all $k \in \{1, 2, \dots, K\}$:
\eqas{
\Probop \Big[ | c_{\fh, k} - \hat{c}_{\fh, k} | > \epsilon_k  \Big] \leq \frac{\delta}{K}
}
where 
$c_{\fh, k} \coloneqq c_{\fh}(x)$ and $\ch_{\fh, k} \coloneqq \ch_{\fh}(x)$ for $x$ such that $\kappa_{\fh}(x) = k$, 
and
$\epsilon_k \coloneqq \max( \underline{\epsilon}_x, \bar{\epsilon}_x)$.
%{\color{red}Let $x_k \in \kappa_{\fh}^{-1}(k)$.}
By applying the union bound, the following also holds:
\eqas{
%{\color{red}\Prob \[ \bigwedge_{k=1}^{K} c_{\fh, k} \in \Ch(x_k) \] \leq}
\Prob \[ \bigwedge_{k=1}^{K} | c_{\fh, k} - \hat{c}_{\fh, k} | > \epsilon_k  \] \leq \delta,
}
%{\color{red}
%\sout{Considering the fact that $\Xs$ is partitioned into $K$ spaces due to the binning and the equivalence form (\ref{cons_conf:eq:pac_def_equivalent}) of the PAC criterion in Definition \ref{eq:cons_conf_goal},} where}
Considering the fact that $\Xs$ is partitioned into $K$ spaces due to the binning and the equivalence form (\ref{cons_conf:eq:pac_def_equivalent}) of the PAC criterion in Definition \ref{eq:cons_conf_goal},
the claimed statement holds.

\section{Proof of Theorem~\ref{thm:fast}}
\label{sec:thmfastproof}

%Let $\yh_C \coloneqq \yh_C(x)$ and $\yh_M \coloneqq \yh_M(x)$.
%% decomposition
We drop probabilities over $(x,y)\sim D$. First, we decompose the error of a cascading classifier $\Prob\left[ \yh_C(x) \neq y \right]$ as follows:
\eqas{
\Prob\left[ \yh_C(x) \neq y \right] 
&= \Prob\left[ \yh_C(x) \neq y \wedge \Big( \yh_C(x) = \yh_M(x) \vee \yh_C(x) \neq \yh_M(x) \Big) \right] \\
&= \Prob\left[ \Big(\yh_C(x) \neq y \wedge \yh_C(x) = \yh_M(x)\Big) \vee \Big(\yh_C(x) \neq y \wedge \yh_C(x) \neq \yh_M(x) \Big) \right]  \\
&= \Prob\left[ \yh_C(x) \neq y \wedge \yh_C(x) = \yh_M(x) \right] + \Prob\left[ \yh_C(x) \neq y \wedge \yh_C(x) \neq \yh_M(x) \right],
}
where the last equality holds since the events of $\yh_C(x) = \yh_M(x)$ and of $\yh_C(x) \neq \yh_M(x)$ are disjoint.

Similarly, for the error of a slow classifier $\Prob\left[ \yh_M(x) \neq y \right]$, we have:
\eqas{
\Prob\left[ \yh_M(x) \neq y \right]
= \Prob\left[ \yh_M(x) \neq y \wedge \yh_C(x) = \yh_M(x) \right] + \Prob\left[ \yh_M(x) \neq y \wedge \yh_C(x) \neq \yh_M(x) \right].
}
%% difference
Thus, the error difference can be represented as follows:
\begin{multline}
\Prob\left[ \yh_C(x) \neq y \right] - \Prob\left[ \yh_M(x) \neq y \right] \\ 
= \Prob\left[ \yh_C(x) \neq y \wedge \yh_C(x) \neq \yh_M(x) \right] - \Prob\left[ \yh_M(x) \neq y \wedge \yh_C(x) \neq \yh_M(x) \right]. \label{eq:cons_conf_fast_proof_constraint}
\end{multline}
%% simplify
To complete the proof, we need to upper bound (\ref{eq:cons_conf_fast_proof_constraint}) by $\xi$. Define the following events:
\eqas{
D_m &\coloneqq \bigwedge_{i=1}^{m-1} \( \ph_{i}(x) < \gamma_i \) \land \ph_{m}(x) \geq \gamma_{m} \qquad (\forall m\in\{1,...,M-1\}) \\
D_M &\coloneqq \bigwedge_{i=1}^{M-1} \( \ph_{i}(x) < \gamma_i \) \\
E_C &\coloneqq \yh_C(x) \neq \yh_M(x) \\
E_m &\coloneqq \yh_m(x) \neq \yh_M(x) \qquad (\forall m\in\{1,...,M-1\}) \\
F_C &\coloneqq \yh_C(x) \neq y \\
F_m &\coloneqq \yh_m(x) \neq y  \qquad (\forall m\in\{1,...,M-1\}) \\
G &\coloneqq \yh_M(x) \neq y,
}
where $D_1, D_2, \dots, D_M$ form a partition of a sample space.
Then, we have:
\eqas{
\Prob\left[ \yh_C(x) \neq y \wedge \yh_C(x) \neq \yh_M(x) \right]
&= \Prob\left[ F_C \wedge E_C \right] \nonumber \\
&= \Prob\left[ F_C \wedge E_C \wedge \bigvee_{m=1}^{M} D_m \right] \nonumber \\
&= \Prob\left[ \bigvee_{m=1}^{M} ( F_C \wedge E_C \wedge D_m) \right] \nonumber \\
&= \sum_{m=1}^{M} \Prob\left[ F_C \wedge E_C \wedge D_m \right] \nonumber \\
&= \sum_{m=1}^{M} \Prob\left[ F_m \wedge E_m \wedge D_m \right] \nonumber \\
&= \sum_{m=1}^{M} \Prob\left[ F_m \mid E_m \wedge D_m \right] \cdot \Prob\left[ E_m \wedge D_m \right], \nonumber
%&= \sum_{m=1}^{M} \Prob\[ E_C \mid F  \wedge D_m \] \Prob\[ F \wedge D_m \] \nonumber \\
%&= \sum_{m=1}^{M} \Prob\[ E_m \mid F  \wedge D_m \] \Prob\[ F \wedge D_m \] \nonumber.
}

Similarly, we have:
\eqas{
\Prob\[ \yh_M(x) \neq y \wedge \yh_C(x) \neq \yh_M(x) \]
= \sum_{m=1}^{M} \Prob\[ G \mid E_m  \wedge D_m \] \cdot \Prob\[ E_m \wedge D_m \].
}
Thus, (\ref{eq:cons_conf_fast_proof_constraint}) can be rewritten as follows:
\eqas{
&\Prob\left[ \yh_C(x) \neq y \right] - \Prob\left[ \yh_M(x) \neq y \right] \nonumber \\
&= \sum_{m=1}^{M} \( \Prob\left[ F_m \mid E_m  \wedge D_m \right] \cdot \Prob\left[ E_m \wedge D_m \right] - \Prob\left[ G \mid E_m  \wedge D_m \right] \cdot \Prob\left[ E_m \wedge D_m \right] \) \nonumber \\
&= \sum_{m=1}^M (e_m-e_m') \\
&= \sum_{m=1}^{M-1} (e_m-e_m') \\
%&\leq \sum_{m=1}^{M-1} \epsilon_m \\ %\label{eq:cons_conf_fast_proof_per_error} \\
&\leq \xi, %\label{eq:cons_conf_fast_proof_error}
}
where
the last equality holds since $e_M - e'_M = 0$, and 
the last inequality holds due to (\ref{cons_conf:eq:fast:acc_constraint}) with probability at least $1-\delta$,
thus proves the claim.
%where  (\ref{eq:cons_conf_fast_proof_per_error}) holds with probability at least $1 - \delta$ due to the constraint (\ref{cons_conf:eq:fast:constraint}) of the proposed scheme, and  (\ref{eq:cons_conf_fast_proof_error}) holds due to (\ref{cons_conf:eq:fast:acc_constraint2}).

\section{Proof of Theorem~\ref{thm:fast_optimal}}
\label{sec:thmfastoptimal_proof}

%Suppose there is $\gamma_{1:M-1}^{'}$ which is different to $\gamma_{1:M-1}^{*}$ and produces a faster cascading classifier than $\gamma_{1:M-1}^{*}$.
%This implies that there is the first different index of $\gamma$s---\ie $\gamma_i' \neq \gamma_i^{*}$ for some $i \in \{1, \dots, M-1\}$, and $\gamma_j' = \gamma_j^{*}$ for all $j < i$. Since $\gamma^{*}_{1:M-1}$ is the optimal solution of (\ref{cons_conf:eq:fast:acc_constraint}), $\gamma_i' > \gamma_i^{*}$. This further implies that the less number of examples exits at the $i$th branch of the cascading classifier with $\gamma'_{1:M-1}$, but these examples are classified by the upper, slower branches. This means that the overall inference speed of the cascading classifier with $\gamma'_{1:M-1}$ is slower then that with $\gamma^{*}_{1:M-1}$, which leads to a contradiction.

Suppose there is $\gamma_{1}^{'}$ which is different to $\gamma_{1}^{*}$ and produces a faster cascading classifier than the cascading classifier with $\gamma_{1}^{*}$.
Since $\gamma^{*}_{1}$ is the optimal solution of (\ref{cons_conf:eq:fast:acc_constraint}), $\gamma_1' > \gamma_1^{*}$. This further implies that the less number of examples exits at the first branch of the cascading classifier with $\gamma'_{1}$, but these examples are classified by the upper, slower branch. This means that the overall inference speed of the cascading classifier with $\gamma'_{1}$ is slower then that with $\gamma^{*}_{1}$, which leads to a contradiction.

\section{Proof of Theorem~\ref{thm:safe}}
\label{sec:thmsafeproof}

For clarity, we use $r$ to denote a state $x$ is ``recoverable'' (\ie $y^{*}(x)=0$) and $u$ to denote a state $x$ is ``un-recoverable'' (\ie $y^{*}(x)=1$). 
Now, note that a rollout $\zeta(x_0,\pi_{\text{shield}})\coloneqq(x_0,x_1,\dots)$ is unsafe if (i) at some step $t$, we have $y^*(x_t)=u$ (\ie $x_t$ is not recoverable), yet $\yh(o_t)=r$, where $o_t=h(x_t)$ (\ie $\yh$ predicts $x_t$ is recoverable), and furthermore (ii) for every step $i \leq t-1$, $y^*(x_i)=\yh(o_i)=r$---\ie
\begin{align}
\label{eqn:thmplanningproof1}
p_{\text{unsafe}} = \Prob_{\xi \sim D_{\pi_{\text{shield}}}} \left[ \bigvee_{t=0}^{\infty} \left(  \bigwedge_{i=0}^{t-1} \Big(  y^*(x_i)=r \wedge \yh(o_i)=r \Big) \wedge \Big( y^*(x_t) = u \wedge \yh(o_t)=r \Big) \right) \right].
\end{align}
%\SP{
%% ``if'' or ``only if''
%{\color{red}Osbert, I think the equation holds in the above. Do you think it's right? Otherwise, I think we can only form a lower bound.}}
Condition (i) is captured by the second parenthetical inside the probability; intuitively, it says that $\yh(o_t)$ is a false negative. Condition (ii) is captured by the first parenthetical inside the probability; intuitively, it says that $\yh(o_i)$ is a true negative for any $i\leq t-1$.
%Note that false positives---\ie $\yh(o_t)=u$ but $y^*(x_t)=s$---are fine, since they reduce the rate at which the robot reaches its goal but do not affect safety.
Next, let the event $E_t$ be
\begin{align*}
E_t \coloneqq \bigwedge_{i=0}^{t-1} y^*(x_i)=r \wedge y^*(x_t) = u,
\end{align*}
then the following holds:
\eqas{
\Prob_{\xi \sim D_{\pi_{\text{shield}}}} & \left[ \bigvee_{t=0}^{\infty} \left( \bigwedge_{i=0}^{t-1} 
\Big( y^*(x_i)=r \wedge \yh(o_i)=r \Big) \wedge \Big( y^*(x_t) = u \wedge \yh(o_t)=r \Big) \right) \right] \\
	&= \Prob_{\xi \sim D_{\pi_{\text{shield}}}} \left[ \bigvee_{t=0}^{\infty} \left( \bigwedge_{i=0}^{t-1} \Big( y^*(x_i)=r \wedge y^*(x_t) = u \Big) \wedge \left( \bigwedge_{i=0}^{t-1} \yh(o_i)=r \wedge \yh(o_t)=r \right) \right) \right] \\
	&= \Prob_{\xi \sim D_{\pi_{\text{shield}}}} \left[ \bigvee_{t=0}^{\infty} \left( E_t \wedge \bigwedge_{i=0}^{t-1} \yh(o_i)=r \wedge \yh(o_t)=r \right) \right] \\
	&\leq \Prob_{\xi \sim D_{\pi_{\text{shield}}}} \left[ \bigvee_{t=0}^{\infty} \Big( E_t \wedge \yh(o_t)=r \Big) \right].
}
Recall that $\bar{p} \coloneqq \sum_{t=0}^{\infty}\Prob_{\xi \sim D_{\hat{\pi}}}[ E_t ]$ and $p_{\tilde{D}}(o)\coloneqq\sum_{t=0}^{\infty}p_{D_{\hat{\pi}}}(o\mid E_t)\cdot \Prob_{\xi \sim D_{\hat{\pi}}}[ E_t ]/\bar{p}$; then we can upper-bound (\ref{eqn:thmplanningproof1}) as follows:
\eqas{
p_{\text{unsafe}}
&\le \Prob_{\xi \sim D_{\pi_{\text{shield}}}}\left[ \bigvee_{t=0}^{\infty} \Big( E_t \wedge \yh(o_t) = r \Big) \right] \\
&= \sum_{t=0}^{\infty} \Prob_{\xi \sim D_{\pi_{\text{shield}}}}\left[ E_t \wedge \yh(o_t) = r \right] \\
&= \sum_{t=0}^{\infty} \Prob_{\xi \sim D_{\pi_{\text{shield}}}}[ \yh(o_t) = r \mid E_t ] \cdot \Prob_{\xi \sim D_{\pi_{\text{shield}}}}[ E_t ] \\
&= \sum_{t=0}^{\infty} \Prob_{\xi \sim D_{\pi_{\text{shield}}}}[ \yh(o) = r \mid E_t ] \cdot \Prob_{\xi \sim D_{\pi_{\text{shield}}}}[ E_t ] \\
&= \sum_{t=0}^{\infty} \int \mathbbm{1}( \yh(o) = r ) \cdot p_{D_{\pi_{\text{shield}}}}( o \mid E_t) \cdot \Prob_{\xi \sim D_{\pi_{\text{shield}}}}[ E_t ] ~\mathrm{d}o \\
&= \int \mathbbm{1}( \yh(o) = r ) \sum_{t=0}^{\infty} p_{D_{\pi_{\text{shield}}}}( o \mid E_t) \cdot \Prob_{\xi \sim D_{\pi_{\text{shield}}}}[ E_t ] ~\mathrm{d}o \\
&\leq \int \mathbbm{1}( \yh(o) = r ) \sum_{t=0}^{\infty} p_{D_{\hat{\pi}}}( o \mid E_t) \cdot \Prob_{\xi \sim D_{\hat{\pi}}}[ E_t ] ~\mathrm{d}o \\
&= \int \mathbbm{1}( \yh(o) = r ) p_{\tilde{D}}(o) \bar{p} ~\mathrm{d}o \\
&= \bar{p} \cdot \Prob_{o\sim\tilde{D}}[\yh(o)=r],
}
where we use the fact that $E_t$ are disjoint by construction for the first equality, and
we use $o$ without time index $t$ for the third equality since it clearly represents the last observation if it is conditioned on $E_t$.
Moreover, the last inequality holds due to the following: 
(i) $p_{D_{\pi_{\text{shield}}}}( o \mid E_t) = p_{D_{\hat{\pi}}}( o \mid E_t)$, since $E_t$ implies that the backup policy of $\pi_{\text{shield}}$ isn't activated up to the step $t$, so $\pi_{\text{shield}} = \hat{\pi}$, and
(ii) $\Prob_{\xi \sim D_{\pi_{\text{shield}}}}[ E_t ] \leq \Prob_{\xi \sim D_{\hat{\pi}}}[ E_t ]$, since $\pi_{\text{shield}}$ is less likely to reach unsafe states by its design than $\hat{\pi}$. Thus, the constraint in (\ref{cons_conf:alg:safe}) implies $p_{\text{unsafe}}\le\xi$ with probability at least $1 - \delta$, so the claim follows.

\section{Additional Discussion}
\label{sec:additionaldiscussion}

\subsection{Limitation to On-Distribution Setting}
\label{sec:additionaldiscussion_limitations}

Our PAC guarantees (\ie Theorems \ref{thm:main} \& \ref{cons_conf:thm:pac_more_formal}) transfer to the test distribution if it is identical to the validation distribution. We believe that providing theoretical guarantees for out-of-distribution data is an important direction; however, we believe that our work is an important stepping stone towards this goal. In particular, to the best of our knowledge, we do not know of any existing work that provides theoretical guarantees on calibrated probabilities even for the in-distribution case.
One possible direction is to use our approach in conjunction with covariate shift detectors---\eg \citep{gretton2012kernel}. Alternatively, it may be possible to directly incorporate ideas from recent work on calibrated prediction with covariate shift \citep{park2020calibrated} or uncertainty set prediction with covariate shift \citep{cauchois2020robust,tibshirani2019conformal}. In particular, we can use importance weighting $q(x)/p(x)$, where $p(x)$ is the training distribution and $q(x)$ is the test distribution, to reweight our training examples, enabling us to transfer our guarantees from the training set to the test distribution. The key challenge is when these weights are unknown. In this case, we can estimate them given a set of unlabeled examples from the test distribution \citep{park2020calibrated}, but we then need to account for the error in our estimates.

\subsection{Baselines}
\label{sec:additionaldiscussion_baselines}
The following includes brief descriptions on baselines that we used in experiments.

\textbf{Histogram binning.} 
This algorithm calibrates the top-label confidence prediction of $\hat{f}$ by sorting the calibration examples $(x, y)$ into bins $B_i$ based on their predicted top-label confidence---\ie $(x, y)$ is associated with $B_i$ if $\hat{p}(x) \in B_i$. Then, for each bin, it computes the empirical confidence $\ph_i \coloneqq \frac{1}{|S_i|} \sum_{(x, y) \in S_i} \mathbbm{1}( \hat{y}(x) = y)$, where $S_i$ is the set of labeled examples that are associated with bin $B_i$---\ie the empirical counterpart of the true confidence in (\ref{eqn:wellcalibrationbin}). Finally, during the test-time, it returns a predicted confidence $\ph_i$ for all future test examples $x$ if $\hat{p}(x) \in B_i$.

\textbf{$(1-\xi^{'})$-softmax.}  In fast DNN inference, a threshold can be heuristically chosen based on the desired relative error $\xi$ and the validation error of the slow model. In particular, when a cascading classifier consists of two branches---\ie $M=2$, the threshold of the first branch is chosen by $\gamma_1 = 1 - \xi^{'}$, where $\xi^{'}$ is the sum of $\xi$ and the validation error of the slow model. We call this approach $(1-\xi^{'})$-softmax.

\textbf{$(1-\xi^{'})$-temperature scaling.} A more advanced approach is to first calibrate each branch to get better confidence. We consider using temperature scaling to do so---\ie we first calibrate each branch using the temperature scaling, and then use the branch threshold by $\gamma_1 = 1 - \xi^{'}$ when $M=2$. We call this approach $(1-\xi^{'})$-temperature scaling.

\subsection{Calibration: Induced Intervals for ECE and Reliability Diagram}
\label{sec:additionaldiscussion_transform}

\textbf{ECE.} The expected calibration error (ECE), which is one way to measure calibration performance, is defined as follows:
\eqas{
	\text{ECE} \coloneqq \sum_{j=1}^{J} \frac{|S_j|}{|S|} \left| \frac{1}{|S_j|} \sum_{(x, y) \in S_j} \ph(x) - \frac{1}{|S_j|} \sum_{(x, y) \in S_j} \mathbbm{1}(\yh(x) = y) \right|,
}
where $J$ is the total number of bins for ECE, 
$S \subseteq \Xs \times \Ys$ is the evaluation set, and
$S_j$ is the set of labeled examples associated to the $j$th bin---\ie $(x, y) \in S_j$ if $\ph(x) \in B_j$.

A confidence coverage predictor $\Ch(x)$ outputs an interval instead of a point estimate  $\ph(x)$ of the confidence. To evaluate the confidence coverage predictor, we remap intervals using the ECE formulation. In particular, we equivalently represents $\Ch(x)$ by a mean confidence $\ch_{\fh}(x)$ and differences from the mean---\ie $\Ch(x) = [\underline{c}(x), \overline{c}(x)] = [\ch_{\fh}(x) - \underline{\epsilon}_x, \ch_{\fh}(x) + \overline{\epsilon}_x]$ (see Appendix \ref{sec:pac_dis} for a description on this equivalent representation). Then, we sort each labeled example into bins using $\ch_{\fh}(x)$ to form $S_j$. Next, we consider an interval instead of $\ph(x)$ to compute ECE---\ie $\text{ECE}_{\text{induced}} \coloneqq \[\underline{\text{ECE}}, \overline{\text{ECE}}\]$, where
\eqas{
	\underline{\text{ECE}} &\coloneqq \sum_{j=1}^{J} \frac{|S_j|}{|S|} \inf_{\ph_j \in \text{Conf}_j} \left| \ph_j - \frac{1}{|S_j|} \sum_{(x, y) \in S_j} \mathbbm{1}(\yh(x) = y) \right|, \\
	\overline{\text{ECE}} &\coloneqq \sum_{j=1}^{J} \frac{|S_j|}{|S|} \sup_{\ph_j \in \text{Conf}_j} \left| \ph_j - \frac{1}{|S_j|} \sum_{(x, y) \in S_j} \mathbbm{1}(\yh(x) = y) \right|, \text{~and} \\
	\text{Conf}_j &\coloneqq \[ \underline{\text{Conf}}_j, \overline{\text{Conf}}_j\] \coloneqq \[ \min_{(x, y) \in S_j} \underline{c}(x), \max_{(x, y) \in S_j} \overline{c}(x) \].
}

\textbf{Reliability diagram.} This evaluation technique is a pictorial summary of the ECE, where the $x$-axis represents the mean confidence $\frac{1}{|S_j|} \sum_{(x, y) \in S_j} \ph(x)$ for each bin, and the $y$-axis represents the mean accuracy $\frac{1}{|S_j|} \sum_{(x, y) \in S_j} \mathbbm{1}(\yh(x) = y)$ for each bin. If an interval from a confidence coverage predictor is given, then the mean confidence is replaced by $\text{Conf}_j$, resulting in visualizing an interval instead of a point.

\subsection{Fast DNN Inference: Cascading Classifier Training}
\label{sec:additionaldiscussion_cascading_training}

We describe a way to train a cascading classifier with $M$ branches. Basically, we consider to independently train $M$ different neural networks with a shared backbone. 
In particular, we first train the $M$th branch using a training set by minimizing a cross-entropy loss. Then, we train the $(M-1)$th branch, which consists of two parts: a backbone part from the $M$th branch, and the head of the $(M-1)$th branch. Here, the backbone part is already trained in the previous stage, so we do not update the backbone part and only train the head of this branch using the \emph{same} training set by minimizing the \emph{same} cross-entropy loss (with the \emph{same} optimization hyperparameters). This step is done repeatedly down to the first branch. 

\subsection{Safe Planning: Data Collection from a Simulator}
\label{sec:additionaldiscussion_safeplanning_data}

We collect required data from a simulator, where a given policy $\hat{\pi}$ is already learned over the simulator. We describe how we form the necessary data from rollouts sampled from the simulator.

First, to sample a rollout $\zeta$, we use $\hat{\pi}$ from a random initial state $x_0\sim D$;
%and the goal state is implicitly encoded to the given policy $\hat{\pi}$---it is equivalent to choose a policy $\hat{\pi}$ from a set of policies with different goal states. 
%Then, given the initial state $x_0$ and a policy $\hat{\pi}$, we run an agent to collect states.
we denote the sequence of states visited as a rollout $\zeta(x_0, \hat{\pi}) \coloneqq (x_0, x_1, \dots)$. We denote the induced distribution over rollouts by $\zeta\sim D_{\hat{\pi}}$.
%---\ie a rollout $\zeta$ is drawn from a distribution $D_{\hat{\pi}}$, where an initial state $x_0$ is sampled from $D$ and other states are sampled by running a policy $\hat{\pi}$.
Note that the $\zeta$ contains unsafe states since $\hat{\pi}$ is potentially unsafe. However, when constructing our recoverability classifier, we only use the sequence of safe states, followed by a single unsafe state.
% (\ie $Z'$ and $W'$).
%; this assumes that we use a backup policy $\pi_0$ that makes an agent immediately stop when the recoverability classifier $\yh$ predicts un-recoverability. From this, we can enforce the i.i.d assumption---\ie rollouts during validation and test are drawn from the same distribution.
%\textbf{Calibration set $W$ from $D_{\hat{\pi}}$.}
In particular, we let $W$ be a set of  i.i.d. sampled rollouts $\zeta\sim D_{\hat{\pi}}$.
%\textbf{Calibration set $Z$ from $\tilde{D}$.}
%In building the recoverability classifier, we want to collect a rollout, which consists of a sequence of safe states, followed by a single unsafe state (\ie the first unsafe state). To this end, we denote a distribution over observations which are observed at the first unsafe states by $\tilde{D}$. Thus, $Z$ is a set of observations from $\tilde{D}$. 
Next, for a given rollout, we consider the observation in the first unsafe state in that rollout (if one exists); we denote the distribution over such observations by $\tilde{D}$. Finally, we take $Z$ to be a set of sampled observations $o\sim\tilde{D}$.

\section{Additional Experiments}
\label{sec:additionalexperiments}

\subsection{Calibration} \label{cons_conf:sec:cal_additional}
%% calibration comparison results. n=20000
\begin{figure}[ht]
\centering
	\begin{subfigure}{0.48\textwidth}
	\centering
	\includegraphics[width=\linewidth]{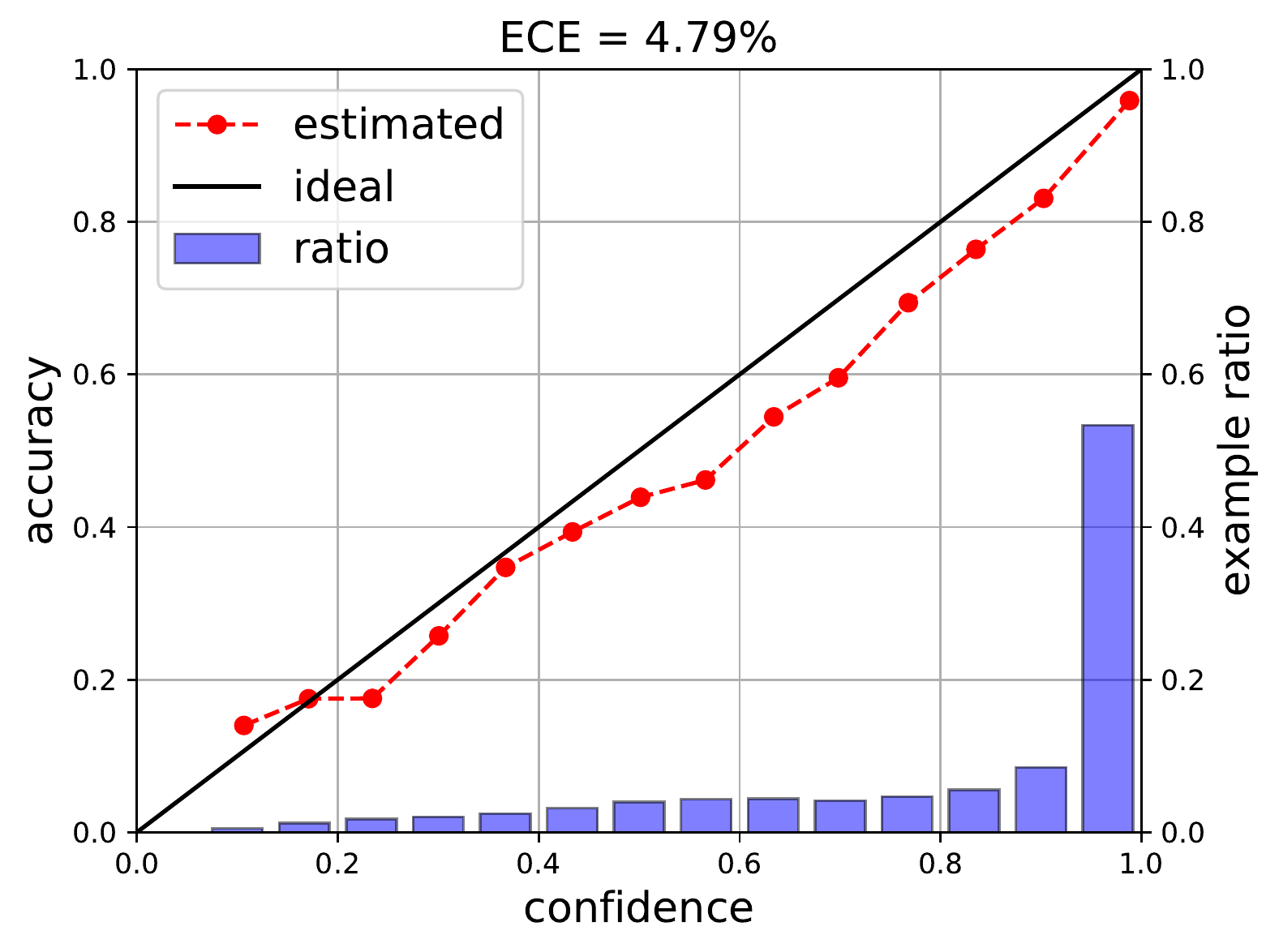}
	\caption{Na\"{\i}ve softmax}
	\end{subfigure}
	\begin{subfigure}{0.48\textwidth}
	\centering
	\includegraphics[width=\linewidth]{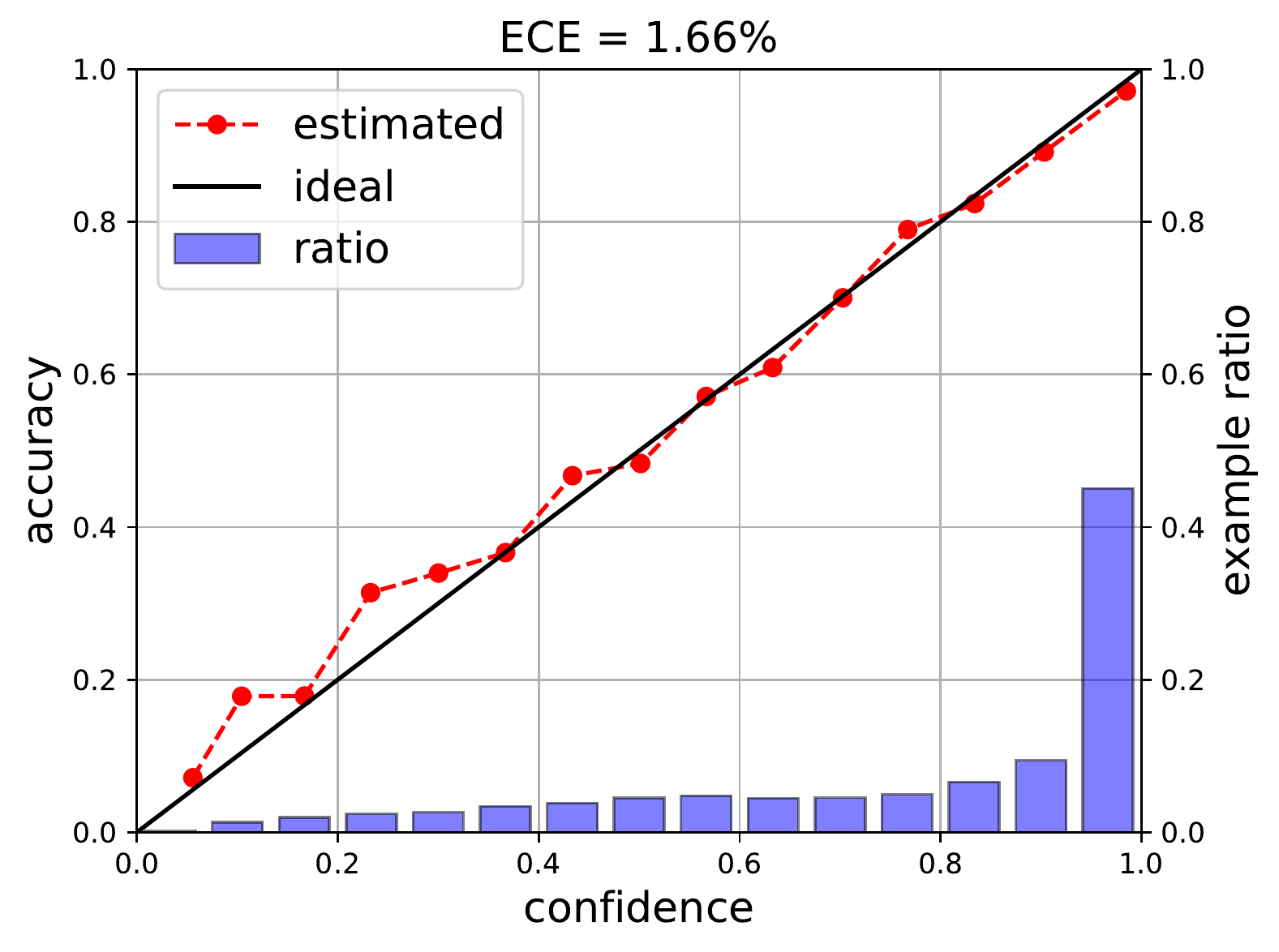}
	\caption{Temperature scaling}
	\end{subfigure}
	\begin{subfigure}{0.48\textwidth}
	\centering
	\includegraphics[width=\linewidth]{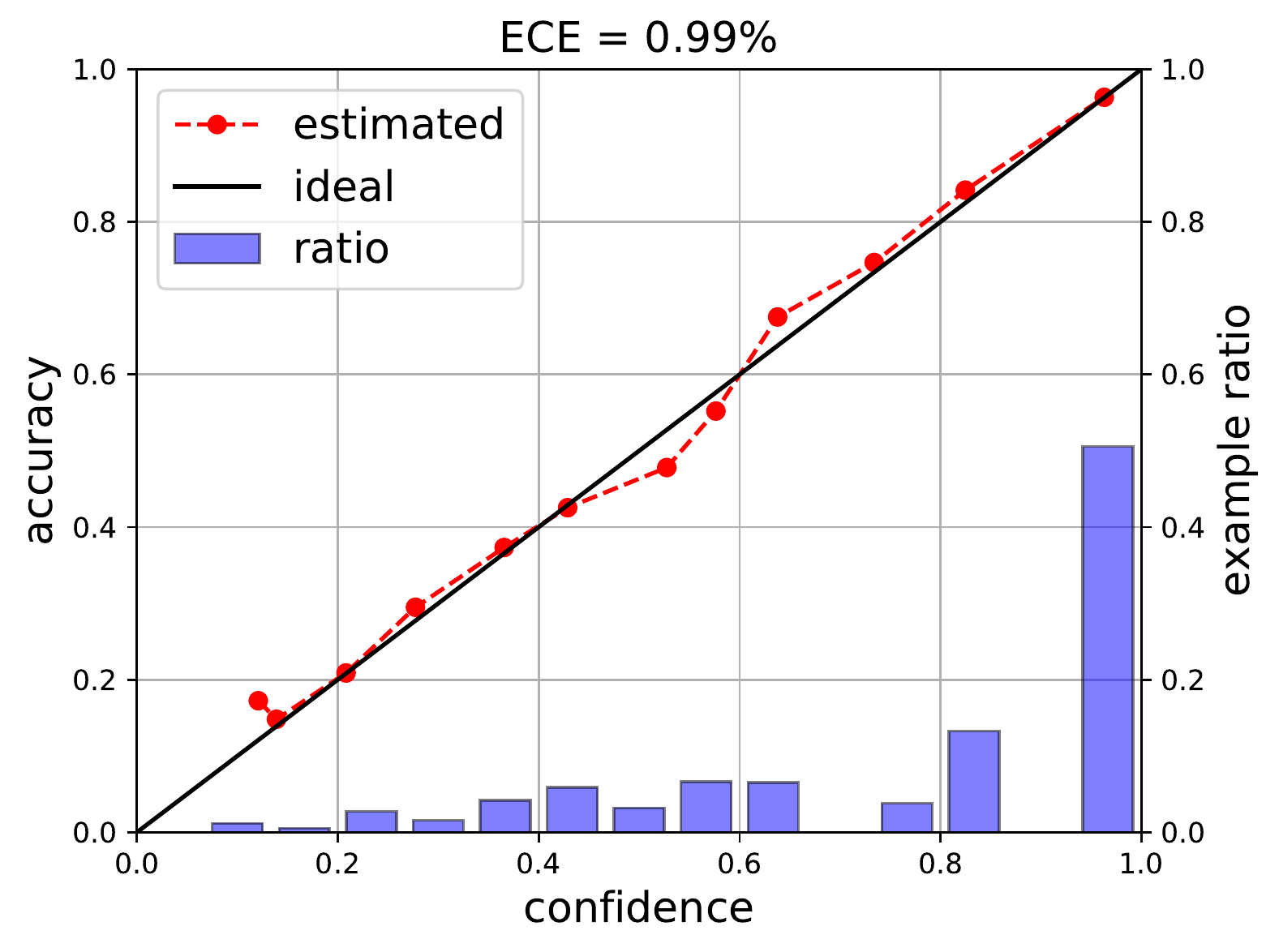}
	\caption{Histogram binning}
	\end{subfigure}
	\begin{subfigure}{0.48\textwidth}
	\centering
	\includegraphics[width=\linewidth]{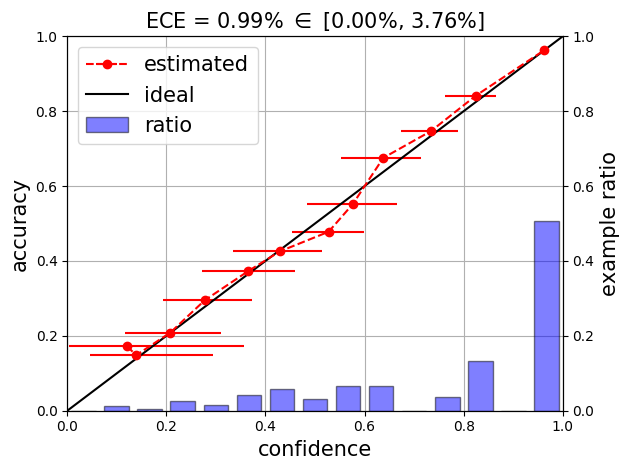}
	\caption{Proposed}
	\end{subfigure}
\caption{Calibration comparison in reliability diagrams and ECEs. The size of validation set for calibration is $20,000$. The blue histogram represents the example ratio for the corresponding bin. The diagonal line labeled ``ideal'' is the best reliability diagram, which produces the zero ECE. The estimated reliability diagram is represented ``estimated'' in dotted red. (a) The na\"{\i}ve softmax output from a neural network is unreliable in ECE. (b, c) The temperature scaling and histogram binning are fairly good calibration approaches, which decreases ECE. (d) The proposed approach generates ``induced'' intervals (see Appendix \ref{sec:additionaldiscussion_transform}) on top of the histogram binning approach, where each interval contains the ideal diagonal line with high probability. Moreover, the proposed one also produces induced ECE interval, where it contains the zero ECE with high probability.}
\label{fig:cal_comp}
\end{figure}

%% calibration comparison results. n=2000
\begin{figure}[ht]
\centering
	\begin{subfigure}{0.48\textwidth}
	\centering
	\includegraphics[width=\linewidth]{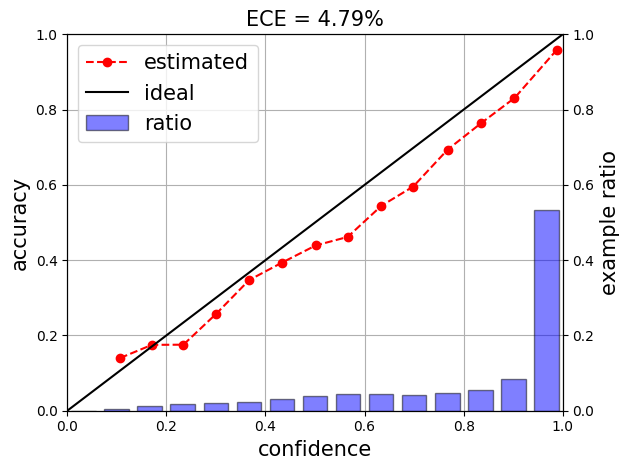}
	\caption{Na\"{\i}ve softmax}
	\end{subfigure}
	\begin{subfigure}{0.48\textwidth}
	\centering
	\includegraphics[width=\linewidth]{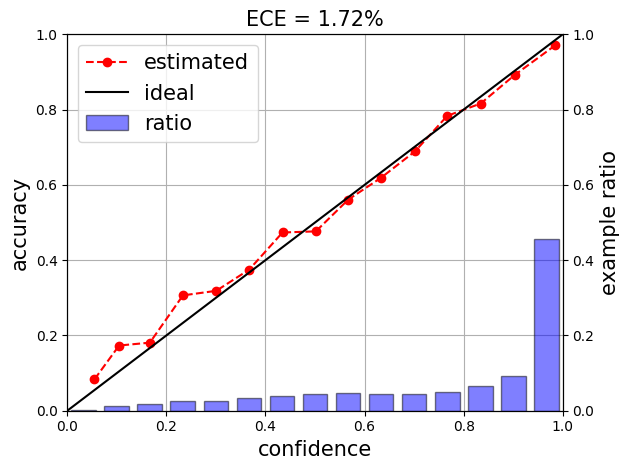}
	\caption{Temperature scaling}
	\end{subfigure}
	\begin{subfigure}{0.48\textwidth}
	\centering
	\includegraphics[width=\linewidth]{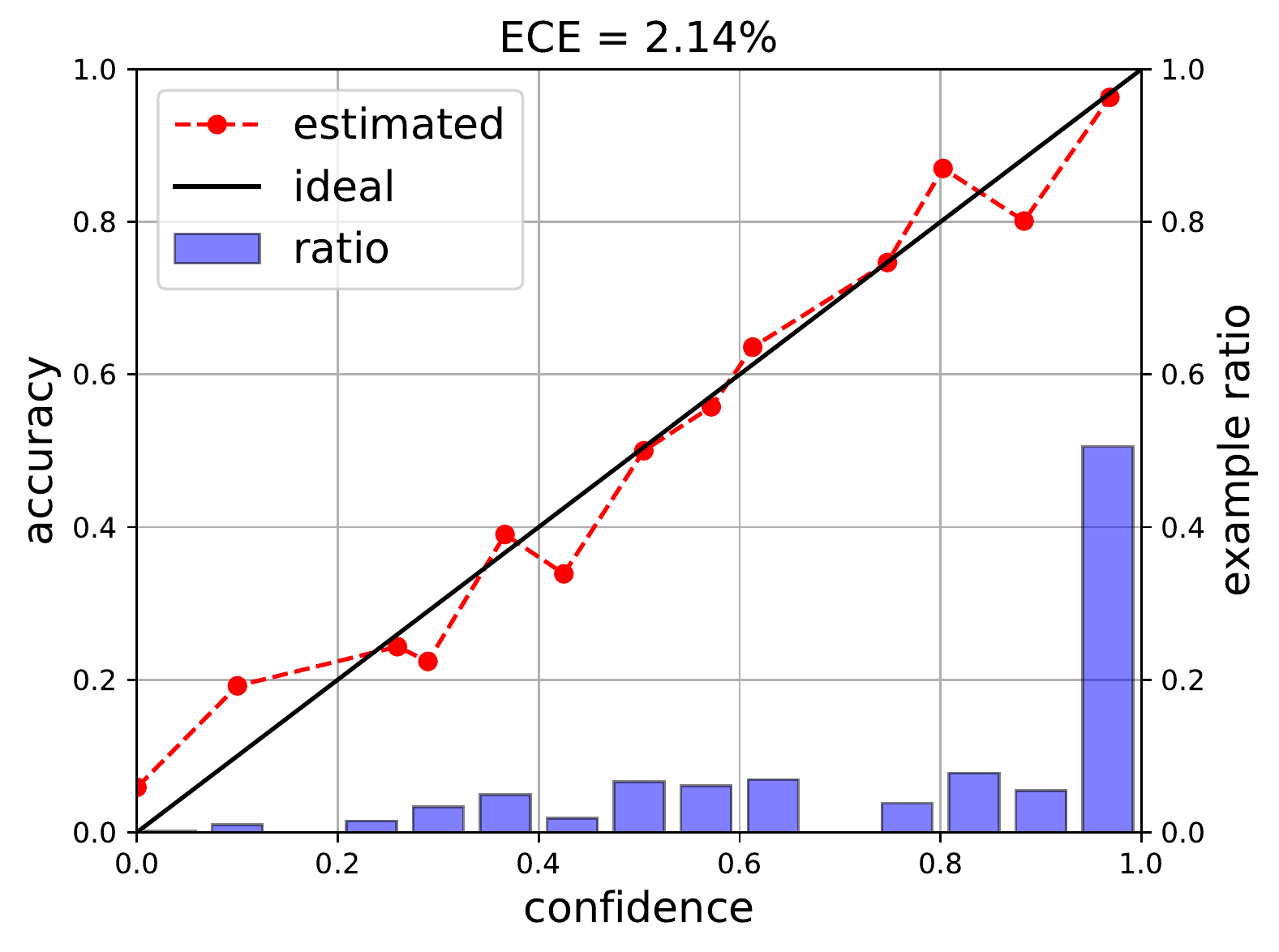}
	\caption{Histogram binning}
	\end{subfigure}
	\begin{subfigure}{0.48\textwidth}
	\centering
	\includegraphics[width=\linewidth]{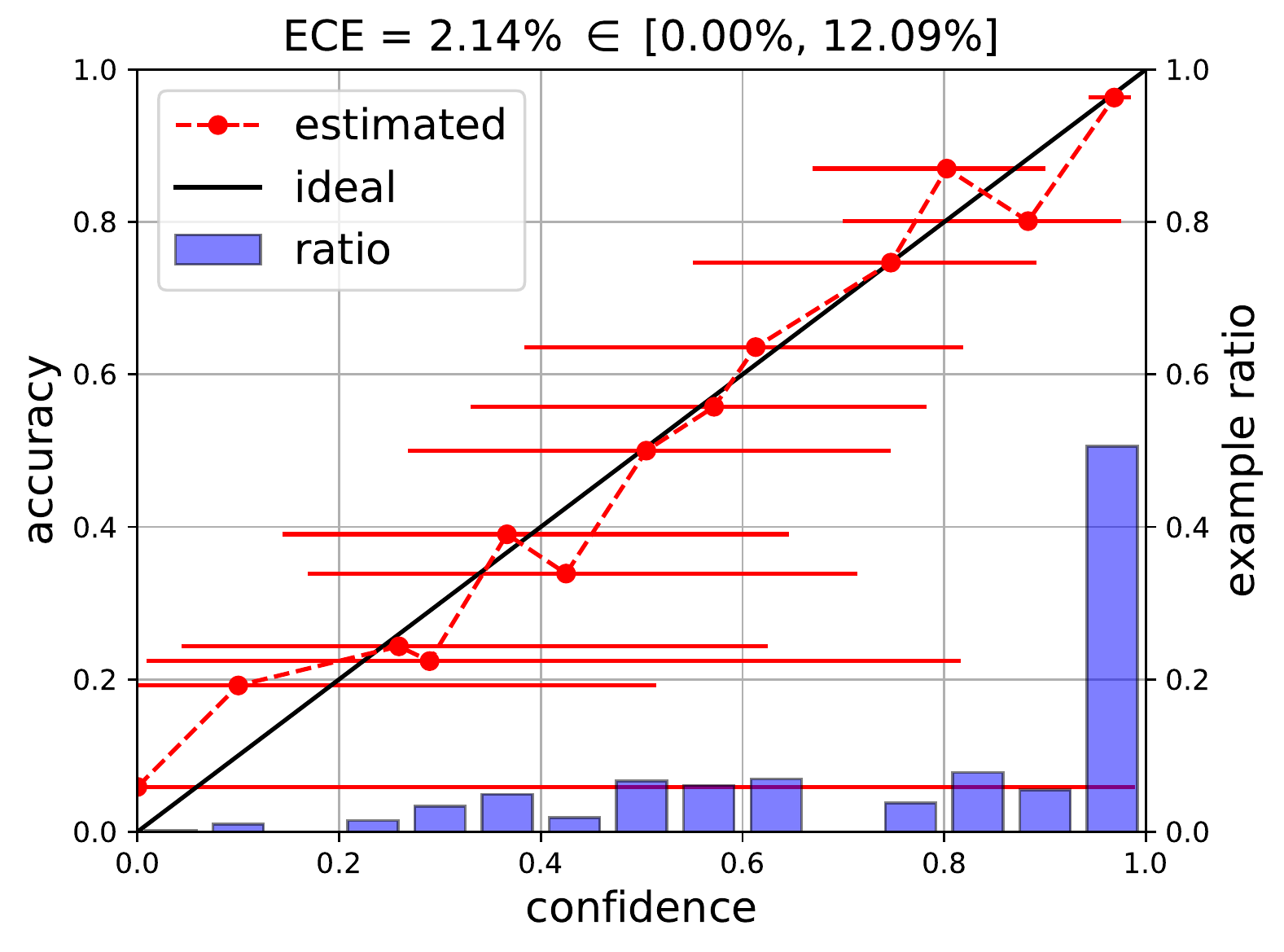}
	\caption{Proposed}
	\end{subfigure}
\caption{Calibration comparison via reliability diagrams and ECE. The size of validation set for calibration is $2,000$. See the caption of Figure \ref{fig:cal_comp} for interpretation. For the induced intervals, the length of the interval is larger than that with $n=20,000$ due to the estimation error.}
\end{figure}

%% accuracy-confidence plot
\begin{figure}[ht]
\centering
\includegraphics[width=0.6\linewidth]{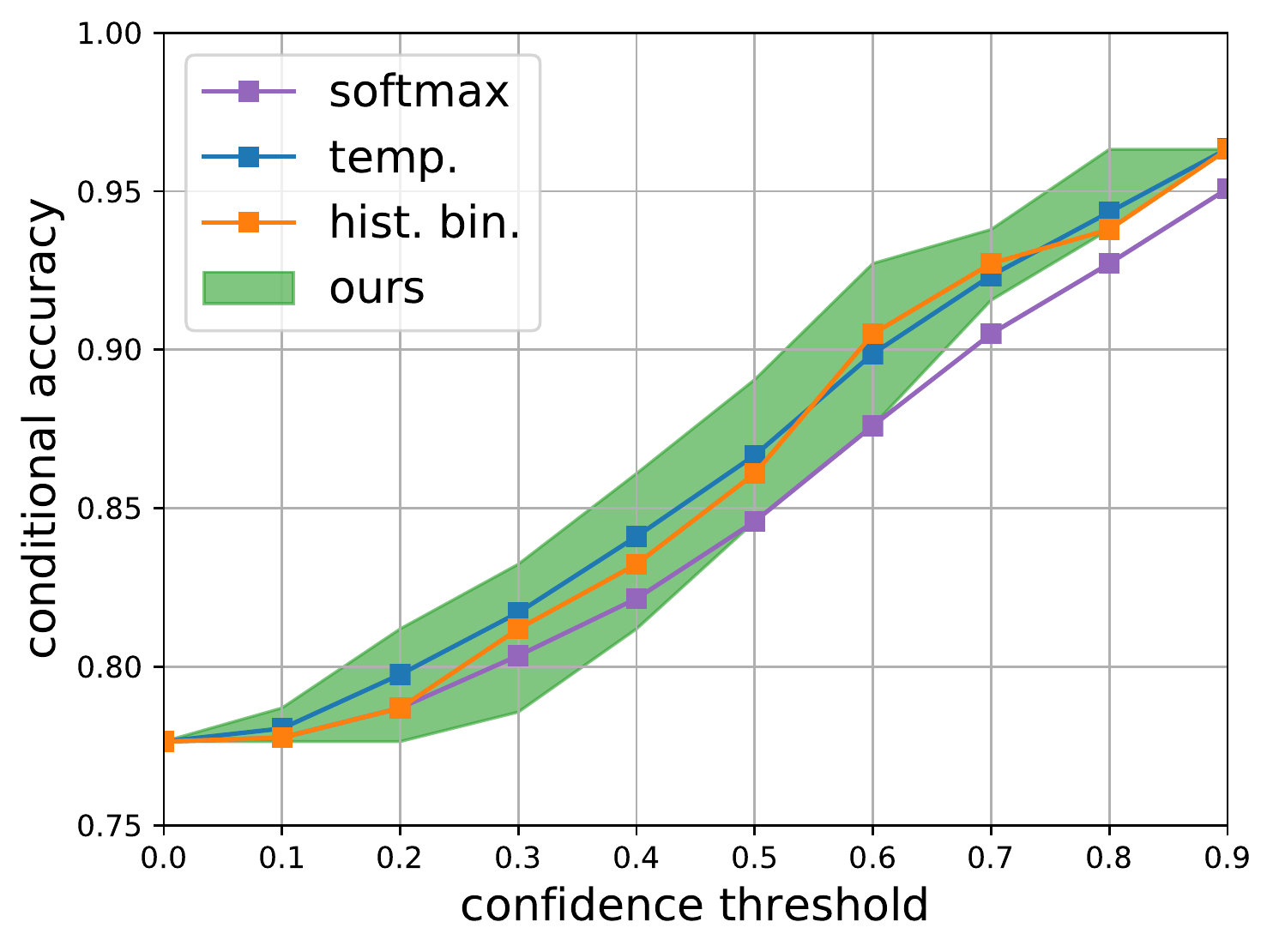}
\caption{Accuracy-confidence plot. This plot is useful for choosing a proper confidence threshold that achieves a desired conditional accuracy \citep{lakshminarayanan2017simple}; the $x$-axis is the confidence threshold $t$ and the $y$-axis is the empirical value of the conditional accuracy $\Prob\[ \yh(x) = y \mid \ph(x) \geq t\]$. Since our approach outputs an interval $[\underline{c}(x), \overline{c}(x)]$, we plot $\Prob\[ \yh(x) = y \mid \underline{c}(x) \geq t\]$ and $\Prob\[ \yh(x) = y \mid \overline{c}(x) \geq t\]$ for the upper and lower bound of the green area, respectively.}
\end{figure}

\clearpage

\subsection{Fast DNN Inference} \label{cons_conf:sec:fast_ablation}
%% ablation: n
\begin{figure}[ht]
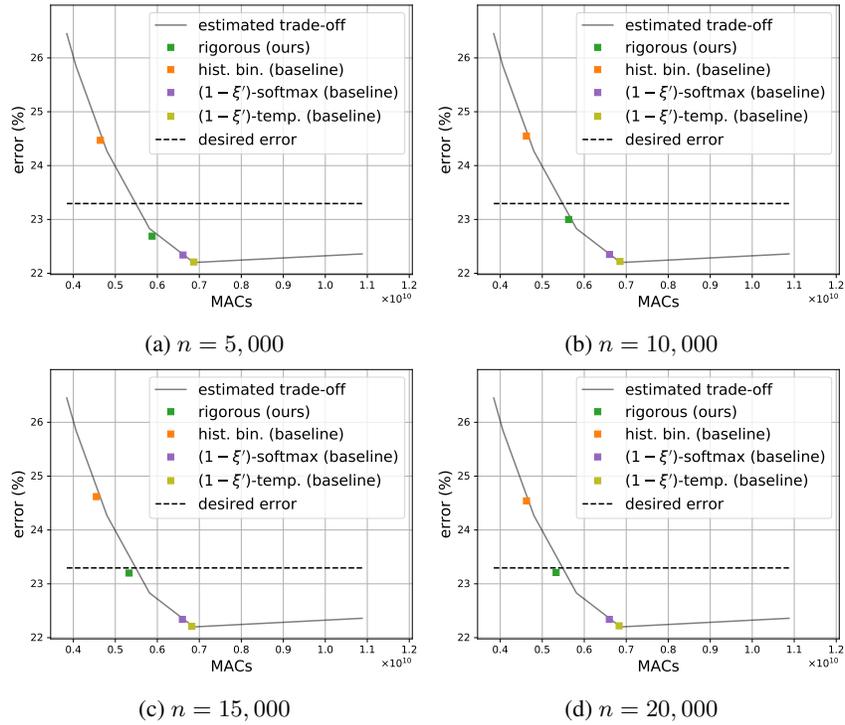

\centering
\begin{subfigure}{0.4\textwidth}
\centering
\includegraphics[width=\linewidth]{{figs/fast_infer/ablation/time_error_n_5000_rho_0.020000_alpha_0.010000}}
\caption{$n = 5,000$}
\end{subfigure}
\begin{subfigure}{0.4\textwidth}
\centering
\includegraphics[width=\linewidth]{{figs/fast_infer/ablation/time_error_n_10000_rho_0.020000_alpha_0.010000}}
\caption{$n = 10,000$}
\end{subfigure}
\begin{subfigure}{0.4\textwidth}
\centering
\includegraphics[width=\linewidth]{{figs/fast_infer/ablation/time_error_n_15000_rho_0.020000_alpha_0.010000}}
\caption{$n = 15,000$}
\end{subfigure}
\begin{subfigure}{0.4\textwidth}
\centering
\includegraphics[width=\linewidth]{{figs/fast_infer/ablation/time_error_n_20000_rho_0.020000_alpha_0.010000}}
\caption{$n = 20,000$}
\end{subfigure}

\caption{Ablation study on various $n$ for $\xi = 0.02$, $\delta = 10^{-2}$, and $M=2$. Each plot uses the shown number of samples $n$ with the same $\xi$ and $\delta$. The ``estimated trade-off'' represents the error and MACs trade-off depending on threshold $\gamma_1$. The markers show the trade-off by the baselines. The ``desired error'' is a user-specified error bound, where the error of each method need to be below of this line. The proposed approach reduces the inference speed as the number of sample increases, while satisfying the desired error bound. The baselines are either fails to satisfy the desired error bound or overly conservative to satisfy the  bound.}
\end{figure}

%% ablation: rho
\begin{figure}[ht]
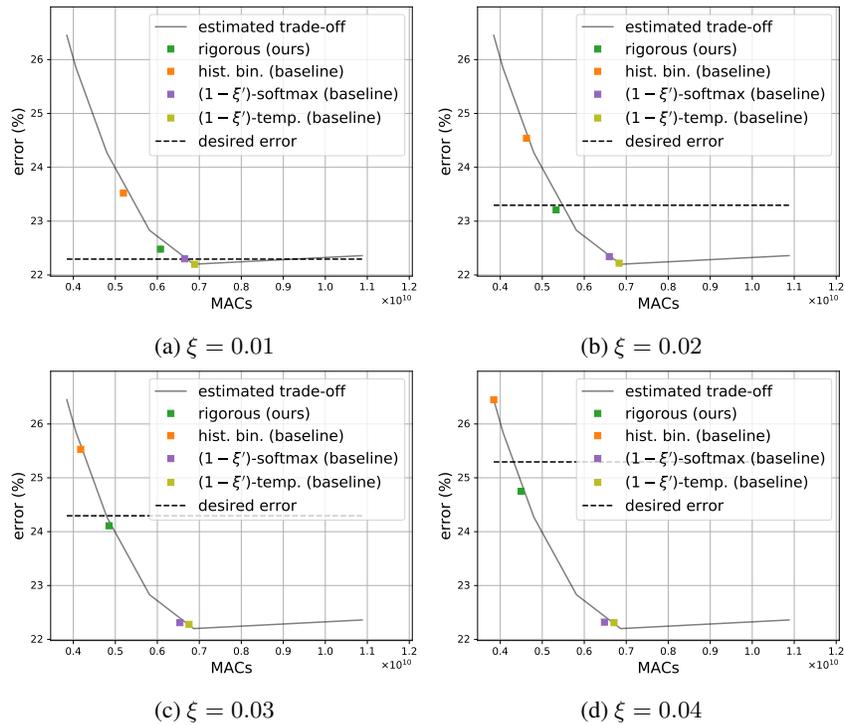

\centering
\begin{subfigure}{0.4\textwidth}
\centering
\includegraphics[width=\linewidth]{{figs/fast_infer/ablation/time_error_n_20000_rho_0.010000_alpha_0.010000}}
\caption{$\xi = 0.01$}
\end{subfigure}
\begin{subfigure}{0.4\textwidth}
\centering
\includegraphics[width=\linewidth]{{figs/fast_infer/ablation/time_error_n_20000_rho_0.020000_alpha_0.010000}}
\caption{$\xi = 0.02$}
\end{subfigure}
\begin{subfigure}{0.4\textwidth}
\centering
\includegraphics[width=\linewidth]{{figs/fast_infer/ablation/time_error_n_20000_rho_0.030000_alpha_0.010000}}
\caption{$\xi = 0.03$}
\end{subfigure}
\begin{subfigure}{0.4\textwidth}
\centering
\includegraphics[width=\linewidth]{{figs/fast_infer/ablation/time_error_n_20000_rho_0.040000_alpha_0.010000}}
\caption{$\xi = 0.04$}
\end{subfigure}

\caption{Ablation study on various $\xi$ for $n = 20,000$, $\delta = 10^{-2}$, and $M=2$. Each plot uses the shown desired relative error $\xi$ with the same $n$ and $\delta$. The ``estimated trade-off'' represents the error and MACs trade-off depending on threshold $\gamma_1$. The markers show the trade-off by the shown baselines. The ``desired error'' is a user-specified error bound, where the error of each method need to be below of this line. The proposed approach allows more error as at most specified by the desired error, to reduce the inference speed. Other baselines either overly increase the error or conservatively maintain overly low error.}
\end{figure}

%% ablation: alpha
\begin{figure}[ht]
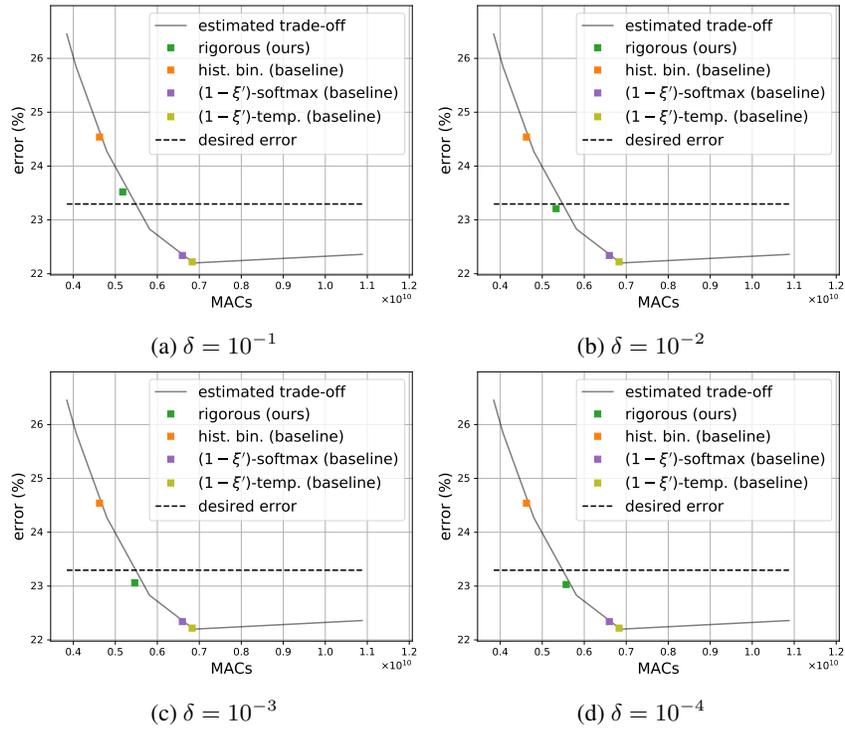

\centering
\begin{subfigure}{0.4\textwidth}
\centering
\includegraphics[width=\linewidth]{{figs/fast_infer/ablation/time_error_n_20000_rho_0.020000_alpha_0.100000}}
\caption{$\delta = 10^{-1}$}
\end{subfigure}
\begin{subfigure}{0.4\textwidth}
\centering
\includegraphics[width=\linewidth]{{figs/fast_infer/ablation/time_error_n_20000_rho_0.020000_alpha_0.010000}}
\caption{$\delta = 10^{-2}$}
\end{subfigure}
\begin{subfigure}{0.4\textwidth}
\centering
\includegraphics[width=\linewidth]{{figs/fast_infer/ablation/time_error_n_20000_rho_0.020000_alpha_0.001000}}
\caption{$\delta = 10^{-3}$}
\end{subfigure}
\begin{subfigure}{0.4\textwidth}
\centering
\includegraphics[width=\linewidth]{{figs/fast_infer/ablation/time_error_n_20000_rho_0.020000_alpha_0.000100}}
\caption{$\delta = 10^{-4}$}
\end{subfigure}

\caption{Ablation study on various $\delta$ for $n = 20,000$, $\xi=0.02$, and $M=2$. Each plot uses the shown misconfidence level $\delta$ with the same $n$ and $\xi$. The ``estimated trade-off'' represents the error and MACs trade-off depending on threshold $\gamma_1$. The markers show the trade-off by the shown baselines. The ``desired error'' is a user-specified error bound, where the error of each method need to be below of this line. The proposed approach produces larger error gap toward the desired error as we enforce stronger confidence level---\ie from $\delta=10^{-1}$ to $\delta=10^{-4}$. Note that other baselines do not depend on $\delta$ as designed.}
\end{figure}

\clearpage
\subsection{Safe Planning}
\label{cons_conf:sec:appendix_safe_planning}

%% ablation: n
\begin{figure}[ht]
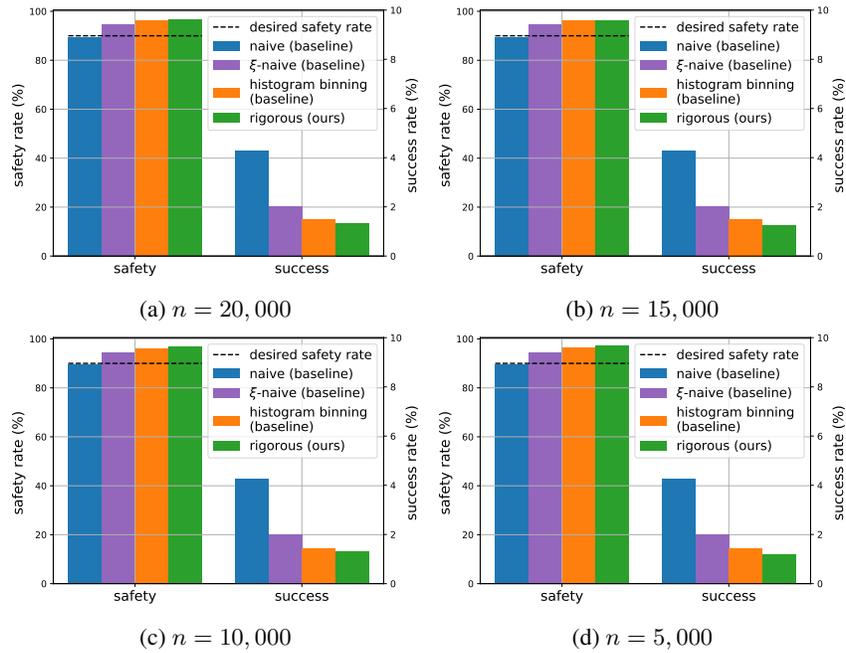

\centering
\begin{subfigure}{0.4\textwidth}
\centering
\includegraphics[width=\linewidth]{{figs/safe_planning/ablation/plot_succ_safety_n_20000_alpha_0.010000_epsilon_0.100000}}
\caption{$n = 20,000$}
\end{subfigure}
\begin{subfigure}{0.4\textwidth}
\centering
\includegraphics[width=\linewidth]{{figs/safe_planning/ablation/plot_succ_safety_n_15000_alpha_0.010000_epsilon_0.100000}}
\caption{$n = 15,000$}
\end{subfigure}
\begin{subfigure}{0.4\textwidth}
\centering
\includegraphics[width=\linewidth]{{figs/safe_planning/ablation/plot_succ_safety_n_10000_alpha_0.010000_epsilon_0.100000}}
\caption{$n = 10,000$}
\end{subfigure}
\begin{subfigure}{0.4\textwidth}
\centering
\includegraphics[width=\linewidth]{{figs/safe_planning/ablation/plot_succ_safety_n_5000_alpha_0.010000_epsilon_0.100000}}
\caption{$n = 5,000$}
\end{subfigure}

\caption{Ablation study on various $n$ for $\xi = 0.1$ and $\delta = 10^{-2}$. Each plot uses the shown sample size $n$ with the same $\xi$ and $\delta$. The ``naive'', ``$\xi$-naive'', and ``histogram binning'' represent baseline results on the  safety rate and success rate. The proposed approach is labeled ``rigorous''. The desired safety rate is a user-specified rate, where the safety rate of each method need to be above of this line. The safety rate of the proposed approach is above the desired safety rate, and it tends to be closer to the desired safety rate as $n$ increases.}
\end{figure}

%% ablation: xi
\begin{figure}[ht]
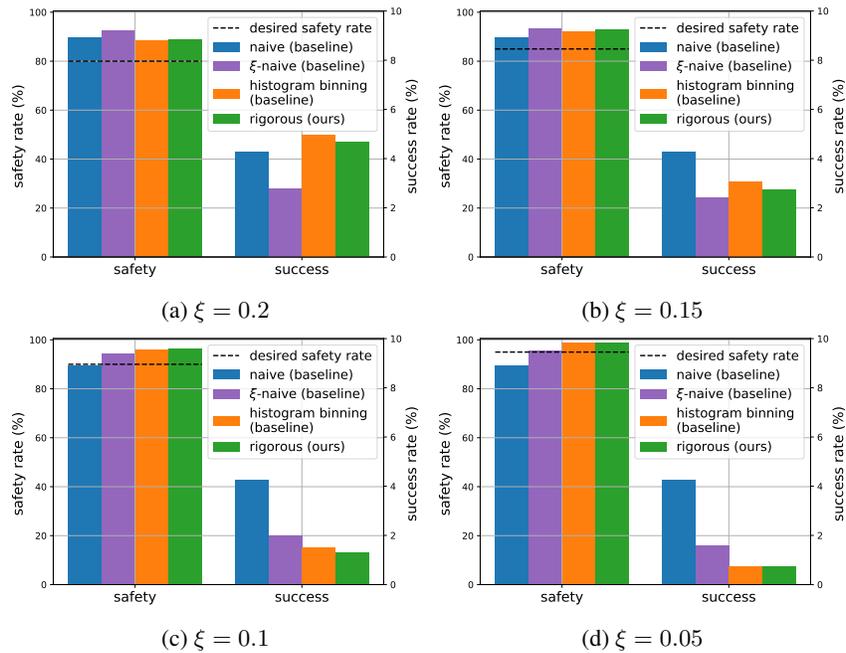

\centering
\begin{subfigure}{0.4\textwidth}
\centering
\includegraphics[width=\linewidth]{{figs/safe_planning/ablation/plot_succ_safety_n_20000_alpha_0.010000_epsilon_0.200000}}
\caption{$\xi = 0.2$}
\end{subfigure}
\begin{subfigure}{0.4\textwidth}
\centering
\includegraphics[width=\linewidth]{{figs/safe_planning/ablation/plot_succ_safety_n_20000_alpha_0.010000_epsilon_0.150000}}
\caption{$\xi = 0.15$}
\end{subfigure}
\begin{subfigure}{0.4\textwidth}
\centering
\includegraphics[width=\linewidth]{{figs/safe_planning/ablation/plot_succ_safety_n_20000_alpha_0.010000_epsilon_0.100000}}
\caption{$\xi = 0.1$}
\end{subfigure}
\begin{subfigure}{0.4\textwidth}
\centering
\includegraphics[width=\linewidth]{{figs/safe_planning/ablation/plot_succ_safety_n_20000_alpha_0.010000_epsilon_0.050000}}
\caption{$\xi = 0.05$}
\end{subfigure}

\caption{Ablation study on various $\xi$ for $n = 20,000$ and $\delta = 10^{-2}$. Each plot uses the shown desired unsafe rate $\xi$ with the same $n$ and $\delta$. The ``naive'', ``$\xi$-naive'', and ``histogram binning'' represent baseline results on the  safety rate and success rate. The proposed approach is labeled ``rigorous''. The desired safety rate is a user-specified rate, where the safety rate of each method need to be above of this line. The safety rate of the proposed approach is closely above the desired safety rate to satisfy the safety constraint. However, the naive can violate the safety constraint, and the $\xi$-naive can be overly optimistic. The histogram binning and $\xi$-naive look empirically fine, but they can in theory violate desired safety rate (\eg see Figure \ref{cons_conf:fig:safe:abl_guarantee}).}
\end{figure}

%% ablation: alpha
\begin{figure}[ht]
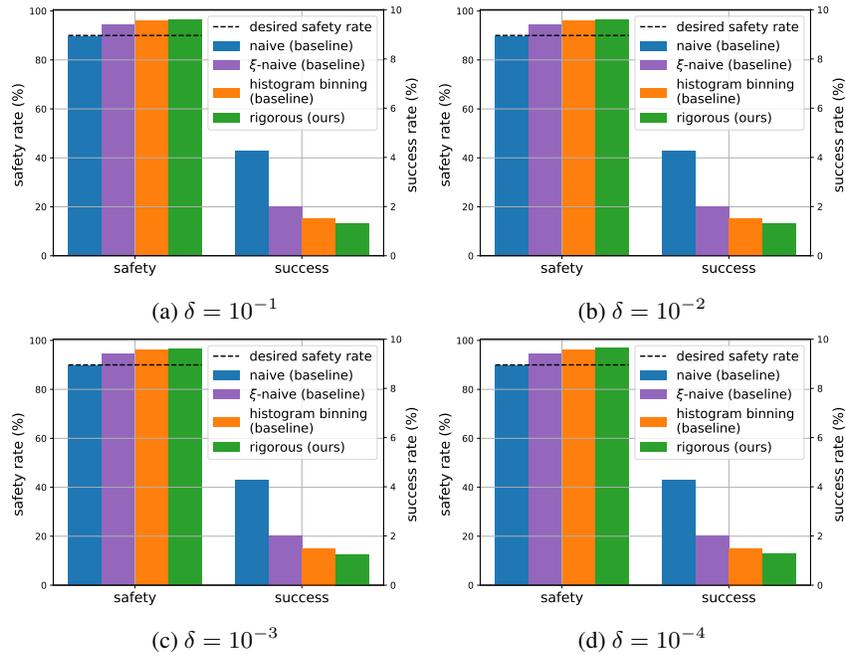

\centering
\begin{subfigure}{0.4\textwidth}
\centering
\includegraphics[width=\linewidth]{{figs/safe_planning/ablation/plot_succ_safety_n_20000_alpha_0.100000_epsilon_0.100000}}
\caption{$\delta = 10^{-1}$}
\end{subfigure}
\begin{subfigure}{0.4\textwidth}
\centering
\includegraphics[width=\linewidth]{{figs/safe_planning/ablation/plot_succ_safety_n_20000_alpha_0.010000_epsilon_0.100000}}
\caption{$\delta = 10^{-2}$}
\end{subfigure}
\begin{subfigure}{0.4\textwidth}
\centering
\includegraphics[width=\linewidth]{{figs/safe_planning/ablation/plot_succ_safety_n_20000_alpha_0.001000_epsilon_0.100000}}
\caption{$\delta = 10^{-3}$}
\end{subfigure}
\begin{subfigure}{0.4\textwidth}
\centering
\includegraphics[width=\linewidth]{{figs/safe_planning/ablation/plot_succ_safety_n_20000_alpha_0.000100_epsilon_0.100000}}
\caption{$\delta = 10^{-4}$}
\end{subfigure}

\caption{Ablation study on various $\delta$ for $n = 20,000$ and $\xi = 0.1$. Each plot uses the shown misconfidence level $\xi$ with the same $n$ and $\xi$. The ``naive'', ``$\xi$-naive'', and ``histogram binning'' represent baseline results on the  safety rate and success rate. The proposed approach is labeled ``rigorous''. The desired safety rate is a user-specified rate, where the safety rate of each method need to be above of this line. 
The proposed approach produces more conservative safety rates---\ie larger gap between the estimated safety rate and desired safety rate---as we enforce stronger confidence level---\ie from $\delta=10^{-1}$ to $\delta=10^{-4}$. Note that other baselines do not depend on $\delta$ as designed.}
\end{figure}

\end{document}